\documentclass{article}

\usepackage[nonatbib, preprint]{neurips_2026}

\usepackage[utf8]{inputenc} % allow utf-8 input
\usepackage[T1]{fontenc}    % use 8-bit T1 fonts
\usepackage{hyperref}       % hyperlinks
\usepackage{url}            % simple URL typesetting
\usepackage{booktabs}       % professional-quality tables
\usepackage{amsfonts}       % blackboard math symbols
\usepackage{nicefrac}       % compact symbols for 1/2, etc.
\usepackage{microtype}      % microtypography
\usepackage{xcolor}         % colors
\usepackage{biblatex}
\usepackage{float}
\usepackage{placeins}
\newcommand{\citet}{\textcite}
\newcommand{\citep}{\parencite}

\usepackage{amsmath,amssymb}%,amsthm}
\usepackage{mathtools}
\usepackage{booktabs}
\usepackage{enumitem}
\usepackage{longtable}  

\newtheorem{assumption}{Assumption}
\newtheorem{remark}{Remark}
\newtheorem{lemma}{Lemma}
\newtheorem{definition}{Definition}

\newtheorem{proposition}{Proposition}
\newtheorem{theorem}{Theorem}
\newcounter{runex}
\newtheorem{runex}{Running Example}

\newcommand{\BlackBox}{\rule{1.5ex}{1.5ex}}
\usepackage{etoolbox}
\newenvironment{proof}{\par\noindent{\bf Proof\ }}{\hfill\BlackBox\vspace{2mm}}
\title{The Value of Mechanistic Priors in Sequential Decision Making}

\newcommand{\Rmech}{R_{\mathrm{mech}}}
\newcommand{\Hmech}{H_{\mathrm{mech}}}
\newcommand{\Rllm}{R_{\mathrm{LLM}}}
\newcommand{\Rtrain}{R_{\mathrm{mech}}^{\mathrm{train}}}
\newcommand{\Rtest}{R_{\mathrm{mech}}^{\mathrm{test}}}
\newcommand{\Bmu}{B_\mu}
\newcommand{\Bcrit}{\Bmu^{\mathrm{crit}}}
\newcommand{\kmu}{\kappa_\mu}
\newcommand{\pistar}{\pi^*}
\newcommand{\pihat}{\hat{\pi}}

\newcommand{\F}{\mathcal{F}}
\newcommand{\BR}{\mathrm{BR}}
\newcommand{\DKL}{D_{\mathrm{KL}}}
\newcommand{\norm}[1]{\left\|#1\right\|}
\newcommand{\E}{\mathbb{E}}

\newcommand{\kl}{\mathrm{kl}}
\newcommand{\argmax}{\arg \, \max}

\newcommand{\TShyb}{\mathrm{TS}_\mathrm{hyb}}
\newcommand{\TSuninf}{\mathrm{TS}_\mathrm{uninf}}
\newcommand{\muhyb}{\mathrm{\mu}_\mathrm{hyb}}
\author{%
  $ $ Itai Shufaro$^*$ \\
  Technion
  \And
  $ $ Gal Benor$^*$ \\
  Technion 
  \And
  Shie Mannor \\
  Technion, NVIDIA Research 
}

\bibliography{bibliography}

\begin{document}

\maketitle
\def\thefootnote{*}\footnotetext{Equal contribution. Correspondence to: \texttt{itai.shufaro@campus.technion.ac.il}, \texttt{gal.benor@campus.technion.ac.il}}

\begin{abstract}
Hybrid mechanistic models, physical priors with learned residuals, promise to reduce the data required for good decisions, but have no computable criterion to test this. We characterize the value of mechanistic priors in sequential decision-making within both asymptotic and burn-in regimes. To formalize this, we introduce the \emph{mechanistic information} of a model—the mutual information between the model's recommended policy $\pihat$ and the true optimal policy $\pistar$—quantified via an occupancy-weighted bias $\Bmu$. In the \emph{asymptotic regime} (large $N$), matched bounds reveal that Bayesian regret scales with the residual entropy $\Hmech$, delivering a theoretical sample complexity reduction of $H(\mu)/\Hmech$ compared to an uninformed baseline. Furthermore, we provide a model certificate to determine empirical sample efficiency. Complementarily, in the clinically relevant \emph{burn-in regime} (small $N$), we establish a lower bound on the penalty incurred by confidently wrong priors. We demonstrate both the asymptotic and burn-in bounds across 5-fluorouracil (5-FU) dosing simulations motivated by published FOLFOX pharmacokinetic data, where a hybrid prior yields large sample-efficiency gains in the burn-in regime. Finally, we contrast these grounded models with LLM priors, demonstrating that LLMs can suffer severe losses in mechanistic information, thereby motivating the exclusive use of physically-grounded priors for safety-critical applications.
\end{abstract}

\section{Introduction}
\label{sec:intro}
Hybrid mechanistic models---known dynamics paired with learned residuals---are pervasive in scientific machine learning and increasingly central to adaptive medicine: universal differential equations~\citep{rackauckas2020}, physics-informed neural networks~\citep{karniadakis2021}, grey-box pharmacokinetic models, and the digital twins underlying in silico clinical trials~\citep{eadie2026} all share the same structural premise. The premise is that physical structure reduces the data required to reach good decisions, and it is taken for granted in practice. Yet there is no computable, pre-trial criterion that tells a decision-maker whether a specific hybrid model will save samples on a specific sequential task. This is the goal of this paper.

Information-theoretic bounds connecting prior information to sample complexity already exist for other online settings. \citet{russo2016} show that the Bayesian regret of Thompson sampling on a $K$-armed bandit scales with the entropy of the optimal arm, and \citet{shufaro2025bits} extend this to a regret--information trade-off that quantifies the value of $R$ nats of side information. Both lines treat the prior as exogenous: $R$ nats arrive from somewhere and are used. We focus on extending their contribution by calculating the value of mechanistic models. 

\subsection{Contributions}
\textbf{Mechanistic information and critical bias}
We define \emph{mechanistic information} $\Rmech = I(\pistar; \pihat)$, the mutual information between the model's recommended policy $\pihat$ and the optimal policy $\pistar$, and bound it through an occupancy-weighted bias $\Bmu$ that charges the model only for errors the optimal policy actually visits. From this bound, we derive the \emph{critical bias} $\Bmu^{\mathrm{crit}}$: a closed-form quantity computable from calibration data alone that certifies, before any interaction is made, whether a candidate model will reduce sample complexity at horizon $N$.
% We introduce \emph{mechanistic information} ${\Rmech = I(\pistar;\pihat)}$---the mutual information between the model's recommended policy $\pihat$ and the true optimal policy $\pistar$. We then introduce upper bounds on $\Rmech$ and leverage them to introduce the \textit{critical bias} $\Bcrit$, which measures a model's ability to reduce sample complexity and does not require additional trials to compute. 
%NEW: INSTEAD OF ASYMPTOTIC:

\textbf{Matched regret bounds}
Extending the framework of \citet{shufaro2025bits} to mechanistic models, we prove matching upper and lower bounds on Bayesian regret, tight up to a $\sqrt{\log K}$ factor. The bounds scale with the \emph{residual entropy} $\Hmech = H(\mu) - \Rmech$, yielding a sample-complexity reduction of $H(\mu)/\Hmech$ relative to an uninformed baseline. This tightness is what makes the certificate quantitative: it pins $\Bmu^{\mathrm{crit}}$ to a specific operational target rather than an order-of-magnitude estimate.

% \paragraph{Asymptotic bounds} 
% To demonstrate the utility of the critical bias, we extend the framework of \citet{shufaro2025bits} to mechanistic models and introduce regret upper and lower bounds. These bounds directly quantify the relationship between $\Rmech$, $\Bcrit$ and the samples avoided using the model.
\textbf{Burn-in regime} We derive a lower bound on the regret suffered when the prior is concentrated but potentially wrong. This setting is especially relevant when using a population-calibrated model for an individual patient, making this bound applicable for a medical setting. We also demonstrate that LLM priors can significantly degrade performance, motivating the use of physically-grounded priors in safety-critical settings. 

\textbf{Paper organization.}
Section~\ref{sec:setting} formalizes the reduction to a bandit reduction and introduces mechanistic information. Section~\ref{sec:main} proves the asymptotic results and presents the model-quality certificate. Section~\ref{sec:burnin} proves the burn-in and LLM results. Section~\ref{sec:application} presents the clinical instantiation and simulation. Sections~\ref{sec:related} and~\ref{sec:discussion} place the work in context.
\section{Setting}
This section formalizes the reduction from the continuous control problem to a $K$-armed policy bandit and introduces the central quantity $\Rmech$. This frames our problem as policy selection, which is equivalent to selecting an entire treatment strategy upfront rather than selecting individual doses.
\label{sec:setting}
\begin{figure}[b]
\centering
\includegraphics[width=0.95\linewidth]{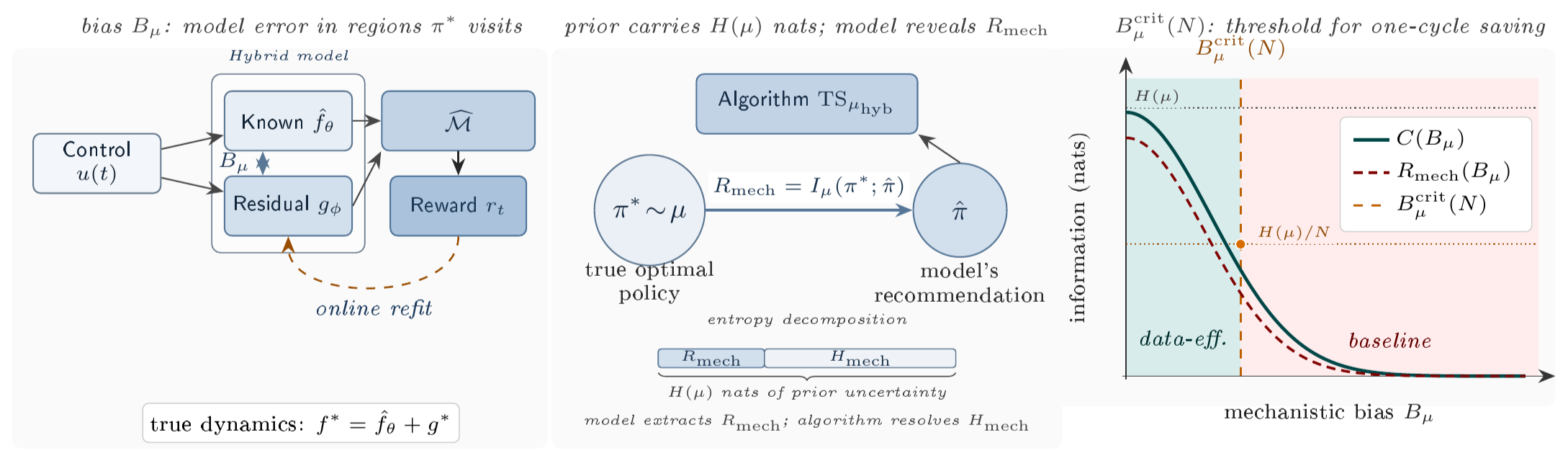}
\caption{Three layers of the hybrid model learning setting.
  \textbf{(A)} Mechanistic model.
  \textbf{(B)} Information flow from $\pistar$ to
  $\pihat$ and to the environment.
  \textbf{(C)} Regret-bias diagram: switching regimes at $\Bmu^{\mathrm{crit}}$.
  }
\label{fig:setting}
\end{figure}
\subsection{Reduction to policy bandit}
\label{sec:ode_to_bandit}
The mechanistic model $\mathcal{M}$ defines a distribution over the dynamics of the continuous control problem: Given a control signal $u(t)$ and a state trajectory $x(t)$, the cumulative reward is $J(u;\,{\cal{M}}) = \int_0^T \ell(x(t),u(t))\,dt$. The \emph{true optimal policy} is $\pistar = \argmax_{u\in\mathcal{U}} J(u;\,\cal{M}^*)$, where $\cal{M}^*$ is the unknown true dynamics. We discretize the control space into $K$ representative policies, $\Pi = \{\pi_1,\ldots,\pi_K\}$ (e.g.\ via a finite-difference grid on the infusion rate), reducing our control problem to a K-MAB one. Each policy $\pi_k$ has mean reward $\bar{r}(\pi_k) = J(\pi_k;\cal{M}^*)$, At round $t$ the learner plays $\pi_t \in \Pi$ and observes $r_t = \bar{r}(\pi_t) + \xi_t$, where $\xi_t$ is zero-mean with standard deviation $\sigma_\xi>0$. We make the following assumption regarding the rewards of the policy class. 
\begin{assumption}[Discretization quality]
\label{ass:disc}
The policy $\pistar$ belongs to $\Pi$, or the best policy in $\Pi$ has
sub-optimality gap at most $\Delta_r/2$ relative to the continuous optimum,
\begin{equation}
    J(\pi^*;{\cal{M}^*}) - \max_{\pi \in \Pi} J(\pi;{\cal{M}^*}) \leq \frac{\Delta_r}{2}.
\end{equation}
\end{assumption}

The mechanistic model $\mathcal{M}$ recommends the best policy in $\Pi$ under the approximate dynamics,
\begin{equation}
  \pihat \;=\; \argmax_{\pi\in\Pi}\, J(\pi;\,\hat{{\cal{M}}}).
  \label{eq:pihat}
\end{equation}
This policy is computable from the model $\hat{\cal{M}}$ alone, without knowing the true dynamics. We remark that $\pihat$ is not the policy for which the model is most accurate everywhere, but rather it is the policy that performs best under the model's predictions. Even an inaccurate model can produce a useful $\pihat$ if its prediction errors cancel in the reward integral. The optimal policy $\pistar$ is drawn from a prior $\mu$ on $\Pi$ (reflecting our uncertainty about ${\cal{M}}^*$ before any interaction). We write $H(\mu)$ for the Shannon entropy of the prior $\mu$ over $\Pi$. All logarithms are natural (nats) unless explicitly labeled $\log_2$ (bits). We use the Bayesian regret, which is the mean gap between the optimal and chosen rewards over the randomness present in the prior, $\pi^*$, and the algorithm. This metric measures the prior's worth in terms of interactions saved.
\begin{definition}[Bayesian regret]
\label{def:br}
\[
  \BR^*(N) = \E_{\pistar\sim\mu}\E\!\left[
    \sum_{t=1}^N \bigl(\bar{r}(\pistar) - \bar{r}(\pi_t)\bigr)
  \right].
\]
\end{definition}
% We note that this reduction covers the joint decision-model optimization (since one can think about the posterior improvement as a model improvement that affects consequent decisions).
\subsection{Occupancy-weighted sensitivity}
\label{sec:occ_sensitivity}
\begin{assumption}[Metric model space] \label{ass:metric}
    The model space is metric, with norm $\norm{{\mathcal{M}} - {\mathcal{M}}'}$.
\end{assumption}
\begin{assumption}[Occupancy-weighted Lipschitz sensitivity]
\label{ass:kmu}
There exists $\kmu > 0$ such that for any two models, ${\cal{M}}, {\cal{M}}'$ and $B > 0$ such that
$\norm{{\cal{M}} - {\cal{M}'}} \leq B$, 
\[
  \norm{\pistar({\cal{M}}) - \pistar({\cal{M}}')}_{\Pi} \;\leq\; \kmu B,
\]
where $\norm{\cdot}_\Pi$ is any metric on the discrete policy class (e.g.\ $0$--$1$ distance).
\end{assumption}
The occupancy weighting charges the model only for errors in regions $\pistar$
actually visits. Concretely, 
\begin{equation} \label{eq:bias}
\Bmu = \sqrt{\sum_{k=1}^K \mu(\pi_k)(J(\pi_k;{\cal{M}}^*) - J(\pi_k;\hat{\cal{M}}))^2}
\end{equation} 
is the prior-weighted RMS reward gap between true and modeled dynamics. A model that is inaccurate in corners of the state space that the optimal policy never reaches has a small $\Bmu$ and transfers many bits. A model that is slightly wrong exactly where the optimum operates has a large $\Bmu$ and transfers almost none. While Assumption \ref{ass:kmu} appears restrictive at first, it is achieved under standard regularity conditions for ODE models (which are common in medical settings, see Lemma \ref{lem:kmu} in Appendix \ref{sec:app_setting}). 

\begin{assumption}
\label{ass:gaussian}
The residual reward $g^* = J(u_t;{\cal{M}^*}) - J(u_t;\hat{\cal{M}})$ is drawn from a Gaussian process with marginal variance $\sigma_\F^2$ per dimension and effective dimension $d_\F$ (rank of the kernel operator of the GP, see Appendix \ref{sec:appCalibration} for more details).
\end{assumption}

Using Assumption \ref{ass:gaussian}, we can tie the variance of the residual dynamics ($\sigma_\F^2$) and the prior's entropy ($H(\mu)$). A richer prior (large $H(\mu)$) implies larger expected residuals (conservative), while a higher-dimensional residual class (large $d_\F$) implies smaller $\sigma_\F^2$ per dimension, since the same total uncertainty is spread across more dimensions. Throughout this paper, we use a running example, for illustration, based on adaptive dosing of 5-fluorouracil (5-FU) in colorectal cancer patients receiving FOLFOX.
\begin{runex}[Adaptive 5-FU dosing]
\upshape
\label{ex:5fu_setup}
The mechanistic model $\hat{\cal{M}}$ is a 3-compartment ODE tracking plasma 5-FU, intracellular FdUMP, free and inhibited thymidylate synthase, and a toxicity proxy. The policy class $\Pi=\{\pi_1,\ldots,\pi_8\}$ ($K=8$) represents eight dose levels spanning 1{,}600--3{,}600~mg/m$^2$, the clinical adjustment range in the \citet{kaldate2012} dataset. Reward $r_t\in\{0,1\}$ indicates whether the 46-hour AUC falls within the therapeutic window $20$---$30$ mg$\cdot$h/L.
\end{runex}

%%%%%%%%%%%%%%%%%%%%%%%%%%%%%%%%%%%%%%%%%%%%%%%%%%%%
\subsection{Mechanistic Information}
\label{sec:mechinf}
%%%%%%%%%%%%%%%%%%%%%%%%%%%%%%%%%%%%%%%%%%%%%%%%%%%%
\begin{definition}[Mechanistic information]
\label{def:mechinf}
\[
  \Rmech \;=\; I_\mu(\pistar;\,\pihat),
\]
the mutual information between $\pistar\sim\mu$ and the mechanistic
recommendation $\pihat$ defined by~\eqref{eq:pihat}.
\end{definition}
$\Rmech$ measures the number of nats about $\pistar$ that the model reveals before a single interaction is performed.  When $\Bmu=0$, the model is exact, so $\pihat = \pistar$ and $\Rmech = H(\mu)$ (the model resolves all uncertainty). When $\Bmu\to\infty$, the model's error is so large that $\pihat$ is essentially independent of $\pistar$, so $\Rmech\to 0$ (the model contributes nothing). The residual entropy $\Hmech = H(\mu) - \Rmech$ quantifies the information the algorithm must discover by interacting with the system. We now introduce upper bounds for the mechanistic information and the residual entropy.

\begin{proposition}[Mechanistic information bounds]
\label{prop:mechinf}
Under Assumptions~\ref{ass:metric}--\ref{ass:gaussian}:
\begin{align}
  \Rmech(\Bmu,\F)
  &\;\leq\; C(\Bmu) \;:=\;
    \frac{d_\F}{2}\log\!\left(1+\frac{\kmu^2\sigma_\F^2}{\kmu^2\Bmu^2+\sigma^2}\right).
    \label{eq:mechinf_ub}
\end{align}
\end{proposition}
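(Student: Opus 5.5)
The bound has the form of a Gaussian channel capacity: $d_\F$ parallel scalar channels, each with signal power $\kmu^2\sigma_\F^2$ and noise power $\kmu^2\Bmu^2+\sigma^2$. The plan is therefore to model the path $\pistar \to \pihat$ as passing through a Gaussian channel, use data processing to move from the discrete policies to continuous model or reward quantities, and then apply the maximum-entropy bound for Gaussians. The first step is a Markov-chain reduction. Under Assumption~\ref{ass:gaussian}, the true dynamics induce the reward vector $J(\cdot;\mathcal{M}^*) = J(\cdot;\hat{\mathcal{M}}) + g^*$, and the optimal policy $\pistar$ is a deterministic function of $J(\cdot;\mathcal{M}^*)$. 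The recommendation $\pihat$ is a function of $J(\cdot;\hat{\mathcal{M}})$. I would write the chain as $\pistar \leftarrow S \to Y \to \pihat$. Here $S$ is the $d_\F$-dimensional GP coordinate vector of the residual, scaled through the sensitivity map of Assumption~\ref{ass:kmu}. $Y$ is its corrupted observation through the model, $Y = S + Z$. The data processing inequality then gives $\Rmech = I(\pistar;\pihat) \le I(S;Y)$.

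Next I would quantify signal and noise power. By Assumption~\ref{ass:gaussian}, each coordinate of the residual has variance $\sigma_\F^2$. After pushing it through the Lipschitz map of Assumption~\ref{ass:kmu}, with Assumption~\ref{ass:metric} giving the norm, the signal power per dimension in policy space is at most $\kmu^2\sigma_\F^2$. The noise has two independent parts. The first is the model's systematic error, which is at most $\Bmu$ in the occupancy-weighted norm of \eqref{eq:bias}; after sensitivity scaling it contributes power at most $\kmu^2\Bmu^2$. The second is the irreducible observation noise, with variance $\sigma^2$ (I would identify it with $\sigma_\xi^2$ or a calibration noise). I would treat the bias as an adversarial, non-Gaussian but independent perturbation with second moment $\kmu^2\Bmu^2$. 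Then, for each coordinate, $I(S_i;Y_i) \le h(Y_i) - h(Y_i \mid S_i)$. The maximum-entropy property bounds $h(Y_i) \le \tfrac12\log\bigl(2\pi e(\kmu^2\sigma_\F^2 + \kmu^2\Bmu^2 + \sigma^2)\bigr)$.

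The remaining step is a lower bound on $h(Y_i\mid S_i)$, and this is the main obstacle. Worst-case additive noise arguments say that Gaussian noise minimizes mutual information among noises of fixed power, which is the wrong direction for an upper bound. A non-Gaussian bias could leak more information than a Gaussian one. I would try to close this gap in one of two ways. One option is to model the bias itself as Gaussian with variance $\kmu^2\Bmu^2$, in the same GP calibration spirit as Assumption~\ref{ass:gaussian}. The other is to use the fact that $\Bmu$ is fixed and the GP prior is Gaussian: the optimal decoder is then linear, and a Gaussian-signal/Gaussian-noise saddle-point argument shows the Gaussian input achieves at most the stated value. With either resolution, $h(Y_i\mid S_i) = \tfrac12\log\bigl(2\pi e(\kmu^2\Bmu^2+\sigma^2)\bigr)$, so each coordinate contributes $\tfrac12\log\bigl(1+\kmu^2\sigma_\F^2/(\kmu^2\Bmu^2+\sigma^2)\bigr)$.

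Finally, I would sum the per-coordinate bounds. This uses independence across the $d_\F$ eigen-directions of the GP kernel, obtained by diagonalizing the kernel operator, together with the chain rule and subadditivity, $I(S;Y)\le\sum_i I(S_i;Y_i)$. The result is $C(\Bmu)$. As sanity checks, the bound decreases in $\Bmu$ and tends to $0$ as $\Bmu\to\infty$, matching the discussion after Definition~\ref{def:mechinf}.
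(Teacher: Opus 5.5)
Your route is the paper's: data processing to pass from the discrete pair $(\pistar,\pihat)$ to continuous reward quantities, then a Gaussian maximum-entropy bound per eigen-direction, summed over the $d_\F$ directions. As written, though, the argument does not prove the statement, for two reasons.

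\textbf{The Markov chain and independence.} The chain $\pistar\leftarrow S\to Y\to\pihat$, with $S$ the residual coordinates, is not a Markov chain. The policy $\pistar=\arg\max_k\bigl(J(\pi_k;\hat{\mathcal{M}})+g^*(\pi_k)\bigr)$ is not a function of $g^*$ alone. If you freeze $\hat{\mathcal{M}}$ to make it one, $\pihat$ becomes a constant and the bound is trivial. The paper's reduction is the correct one: $\pistar$ and $\pihat$ are deterministic functions of $J(\cdot;\mathcal{M}^*)$ and $J(\cdot;\hat{\mathcal{M}})$, so $\Rmech\le I\bigl(J(\cdot;\mathcal{M}^*);J(\cdot;\hat{\mathcal{M}})\bigr)$ directly. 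Note also that by \eqref{eq:bias}, $\Bmu$ is the $\mu$-weighted RMS of the same residual $g^*=J(\cdot;\mathcal{M}^*)-J(\cdot;\hat{\mathcal{M}})$ that you use as the signal. So ``signal'' and ``bias noise'' are not independent random objects, yet your identity $h(Y_i\mid S_i)=h(Z_i)$ requires exactly that independence.

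\textbf{The Gaussian-noise step.} The step you flag is a genuine gap, and your second fix does not close it. The saddle point controls the wrong player. The input is already Gaussian by Assumption~\ref{ass:gaussian}, and for a Gaussian input, Gaussian noise \emph{minimizes} $I(S_i;S_i+Z_i)$ among independent noises of given power. A non-Gaussian bias therefore gives mutual information at least, not at most, $\frac12\log\bigl(1+\kmu^2\sigma_\F^2/(\kmu^2\Bmu^2+\sigma^2)\bigr)$; discrete noise even makes it infinite. A linear decoder does not help either: an upper bound on estimation error yields a \emph{lower} bound on mutual information. And if the bias is literally a fixed vector, it is a deterministic shift that leaves $I(S;Y)$ unchanged, so it cannot produce the $\kmu^2\Bmu^2$ term at all.

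Only your first fix works, and it is an extra hypothesis, not a consequence of Assumptions~\ref{ass:metric}--\ref{ass:gaussian}. You need additive noise of variance $\kmu^2\Bmu^2+\sigma^2$ per direction that is Gaussian, independent of the signal, and independent across the $d_\F$ eigen-directions. The last requirement is what justifies $I(S;Y)\le\sum_i I(S_i;Y_i)$, since it makes the channel memoryless. Independent signal coordinates, which is what diagonalizing the kernel gives you, yield only the reverse inequality.

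The $\sigma^2$ part must be posited as well, because the reward noise $\xi_t$ plays no role in the joint law of $(\pistar,\pihat)$. The unchecked end of your sanity check shows the tension. The discussion after Definition~\ref{def:mechinf} gives $\Rmech=H(\mu)$ at $\Bmu=0$. Under the canonical parametrization, however, $C(0)=\frac{d_\F}{2}\log(1+2H(\mu)/d_\F)<H(\mu)$: about $1.30$ versus $\ln 8\approx2.08$ nats at the running-example values.

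The paper's own proof does not supply the missing ingredient. It applies the maximum-entropy bound to ``any channel with variance bounded by $\kmu^2\Bmu^2+\sigma^2$,'' which is valid only under exactly this Gaussian, independent-noise model. If you state that model as an explicit assumption, your per-coordinate computation and the sum over $d_\F$ reproduce \eqref{eq:mechinf_ub} as in the paper. Without it, neither argument establishes the statement.
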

\begin{remark}
    The upper bound $C(\Bmu)$ is strictly decreasing in $\Bmu$, from $C(0) = \frac{d_\F}{2}\log(1+\kmu^2\sigma_\F^2/\sigma^2)$ to $C(\infty) = 0$. Under the canonical parametrization, $C(0)=\frac{d_\F}{2}\log(1+2H(\mu)/d_\F)\leq H(\mu)$ (with near-equality when $d_\F\gg H(\mu)$, since $\log(1+x)\approx x$ for small $x$). The bound is not vacuous at $\Bmu=0$ when $\sigma>0$: the reward noise places a floor on the channel capacity even for a perfect model. 
\end{remark}
The proof is provided in Appendix \ref{sec:app_setting}. The mechanistic model essentially acts as a noisy Gaussian channel---the model bias contributes noise variance $\kmu^2\Bmu^2$, and reward observation noise contributes additional variance of $\sigma^2$. Since both are independent, the total channel noise is $\kmu^2\Bmu^2+\sigma^2$, so the SNR decreases as $\Bmu$ grows. Bound~\eqref{eq:mechinf_ub} is an upper limit on how much information the model can transmit under model bias and noise constraints.

\begin{runex}[Mechanistic information for 5-FU]
\upshape
\label{ex:5fu_mechinf}
Using calibrated parameters ($K=8$, $\Bmu=0.22$, $\sigma=0.40$, $\kmu=1.8$, $d_\F=3$. See Appendix~\ref{sec:appCalibration} for the derivation).
Prior entropy: $H(\mu)=\ln 8\approx 2.08$~nats (uniform prior). The canonical GP std is $\sigma_\F^2 = 2\sigma^2 H(\mu)/(\kappa_\mu^2 d_\F) \approx 0.0685$, $\sigma_\F \approx 0.26$. We use this to calculate the capacity:
\[
C(0.22) = \tfrac{3}{2} \ln\!\left(1 + \tfrac{1.8^2 \times 0.0685}{1.8^2 \times 0.22^2 + 0.40^2}\right) = \tfrac{3}{2} \ln(1.700) \approx 0.80 \text{ nats}.
\]
The theory certifies $R_{\mathrm{mech}} \leq 0.80$ nats. Since $C(0.22) \leq H(\mu)$, the bound is not vacuous here. Under the bound, $H_{\mathrm{mech}} \geq H(\mu) - C(0.22) = 2.08 - 0.80 = 1.28$ nats.
\end{runex}

\section{Asymptotic Regime}
\label{sec:main}
\subsection{Lower and Upper bounds}
To quantify the regret's dependence on mechanistic information, we lower-bound the Bayesian regret using the residual entropy $\Hmech$, and prove a matching upper bound up to a $\sqrt{\log K}$ factor.
\begin{theorem}[Hybrid regret lower bound]
\label{thm:lb}
Consider the hybrid setting with $K$ policies, $N$ rounds, mechanistic bias
$\Bmu$, and prior $\mu$ on $\Pi$.  Under Assumptions \ref{ass:metric}--\ref{ass:gaussian}, for any algorithm, there exist universal constants
$c,N_0>0$ such that for all $N\geq N_0$:
\begin{equation}
  \BR^*(N) \;\geq\; c\,\sqrt{\frac{KN\,\Hmech}{\log K}},
  \label{eq:lb}
\end{equation}
where $\Hmech = \max(H(\mu)-\Rmech(\Bmu,\F),\,0)$.
\end{theorem}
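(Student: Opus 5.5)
The plan is to reduce the claim to the entropy-dependent Bayesian regret lower bound for $K$-armed bandits with side information from \citet{shufaro2025bits}. The hybrid learner is an ordinary bandit learner that receives $\pihat$ as free side information before round one. Conditioning on $\pihat$ therefore gives a family of bandit problems, one for each value $\hat\pi$. In each, the optimal arm has posterior $\mu(\cdot\mid\pihat=\hat\pi)$, and the average entropy of these posteriors is $H(\pistar\mid\pihat)=H(\mu)-\Rmech=\Hmech$ by Definition~\ref{def:mechinf}. The lower bound of \citet{shufaro2025bits} is stated for a learner holding $R$ nats of side information. With $R=\Rmech$, it gives exactly a bound of order $\sqrt{KN(H(\mu)-R)/\log K}$. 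The proof thus amounts to checking that the mechanistic recommendation fits their side-information model, and that their hard instance is compatible with Assumptions~\ref{ass:metric}--\ref{ass:gaussian}.

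\textbf{Step 1: construct the hard instance.} I would use the standard hard family. The true rewards $\bar r(\pi_k)$ are all equal to a baseline, except at $\pistar$, where the reward is raised by a gap $\Delta$. The noise $\xi_t$ is Gaussian with standard deviation $\sigma_\xi$, and $\pistar\sim\mu$. The residual $g^*$ is then a bump, which lies in the span of a GP with effective dimension $d_\F$ once $d_\F\ge1$, so Assumption~\ref{ass:gaussian} is not violated. The model's recommendation $\pihat$ is a randomized function of $\pistar$ whose mutual information with $\pistar$ equals $\Rmech$; Proposition~\ref{prop:mechinf} guarantees this quantity can be at most $C(\Bmu)$.

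\textbf{Step 2: the information–regret argument.} Fix the side information $\pihat=\hat\pi$. I would apply Fano's inequality, or the Bretagnolle–Huber/KL chain rule, in the following form. Any algorithm that identifies $\pistar$ with constant probability must collect information $I(\pistar;\mathcal{H}_N\mid\pihat)\gtrsim H(\pistar\mid\pihat)-\log 2$ from its history $\mathcal{H}_N$. The chain rule bounds this information by $\sum_t \frac{\Delta^2}{2\sigma_\xi^2}\Pr(\pi_t=\pistar\text{-candidate})$. More precisely, playing any arm at most once the posterior spreads over $e^{\Hmech}$ plausible arms yields roughly $\Delta^2/\sigma_\xi^2$ information per $e^{\Hmech}/K$ fraction. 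I would then balance the two failure modes. Failing to learn $\pistar$ costs about $\Delta N$. Learning it requires about $K\Hmech\sigma_\xi^2/(\Delta^2\log K)$ exploratory pulls, each costing $\Delta$. Optimizing $\Delta\asymp\sigma_\xi\sqrt{K\Hmech/(N\log K)}$ gives the rate in~\eqref{eq:lb}. The condition $N\ge N_0$ ensures that $\Delta$ is below the reward range. Finally, I would average over $\hat\pi$ and use concavity of $\sqrt{\cdot}$ together with Jensen's inequality, handled in the direction that keeps the bound, as done in \citet{shufaro2025bits}. This replaces the conditional entropies by their mean $\Hmech$.

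\textbf{Main obstacle.} The hardest step is the $\log K$ normalization, i.e., obtaining a factor of $K\Hmech/\log K$ rather than a cruder $e^{\Hmech}$ effective number of arms. When $\mu$ is nonuniform, the conditional posteriors need not be uniform on their supports, so the counting argument must use entropy directly. I would first try to invoke Theorem~1 of \citet{shufaro2025bits} as a black box, since it has exactly this form. Failing that, I would restrict attention to posteriors that are near-uniform on sets of size $e^{\Hmech}$ via a typical-set argument, accepting a loss in the constant $c$. The max with $0$ in $\Hmech$ is trivial: if $\Hmech=0$ the bound holds vacuously.
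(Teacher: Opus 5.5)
Your overall route matches the paper's. You treat $\pihat$ as free side information, use the chain rule $H(\pistar\mid\pihat)=H(\mu)-\Rmech=\Hmech$, and feed the residual problem into the entropy lower bound of \citet{shufaro2025bits}. The paper does this by applying Proposition~\ref{prop:shufaro_lb} directly to the posterior of $\pistar$ given $\pihat$, which it asserts has entropy $\Hmech$.

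\textbf{The averaging step fails.} Conditioning on $\pihat=\hat\pi$ gives at best $c\sqrt{KN X/\log K}$ with $X=H(\pistar\mid\pihat=\hat\pi)$. To conclude you therefore need $\E\sqrt{X}\ge\sqrt{\E X}=\sqrt{\Hmech}$. Concavity of $\sqrt{\cdot}$ gives exactly the reverse inequality, so no direction of Jensen keeps the bound. The loss is real, not an artifact of technique, as this example shows.
\begin{itemize}
\item Take uniform $\mu$ and let $\pihat=\pistar$, except that with probability $q=K^{-1/2}$ it is replaced by a fixed arm $\pi_1$.
\item Every value $\pihat\neq\pi_1$ then reveals $\pistar$ exactly.
\item The event $\{\pihat=\pi_1\}$ has probability about $K^{-1/2}$ and a near-uniform posterior, so $\Hmech\approx K^{-1/2}\log K$.
\item Commit to $\pihat$ off that event and run a minimax algorithm such as MOSS on it. For bounded means and $N\ge K$, this gives Bayesian regret $O(K^{-1/2}\sqrt{KN})=O(\sqrt N)$.
\item The claimed bound is $cK^{1/4}\sqrt N$, which exceeds this once $K$ is large.
\end{itemize}
Your reduction therefore yields $\Hmech$ only when $H(\pistar\mid\pihat=\hat\pi)$ is essentially independent of $\hat\pi$. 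An example is uniform $\mu$ with the symmetric $\beta/\alpha$ channel used to build $\muhyb$ in Theorem~\ref{thm:ub}. This is exactly what the paper's proof tacitly uses when it treats the posterior as a single prior of entropy $\Hmech$. Without such a hypothesis, $X\le\log K$ gives only $\E\sqrt{X}\ge\Hmech/\sqrt{\log K}$. That is the weaker bound $c\sqrt{KN}\,\Hmech/\log K$.

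\textbf{Step~2 does not produce the rate on its own.} With constant error probability $P_e$, Fano only yields $I(\pistar;\mathcal{H}_N\mid\pihat)\ge\Hmech-\log 2-P_e\log(K-1)$. This is vacuous in the regime $\Hmech\lesssim\log K$ that matters here. Controlling that term is precisely the $\log K$ issue you defer. Your count of $K\Hmech\sigma_\xi^2/(\Delta^2\log K)$ exploratory pulls is asserted rather than derived.

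So your argument, like the paper's, ultimately rests on the black box. You must then verify its hypotheses for each conditional problem. As quoted in the paper, Proposition~\ref{prop:shufaro_lb} also requires $\DKL(p_t\|P^{\pistar})>c'>0$. Neither your proposal nor the paper's proof checks this condition.
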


Theorem \ref{thm:lb} quantifies how much regret can be avoided by utilizing the model. We have shown that $\Rmech$ quantifies how many nats of prior information the model ``spends''  before interaction starts. This reduces the effective problem from one with $H(\mu)$ to one with only $\Hmech$ nats of uncertainty. The lower bound certifies that no algorithm --- even one that uses the prior optimally --- can achieve regret better than $c\sqrt{KN\Hmech/\log K}$. Thus, the ratio $\sqrt{H(\mu)/\Hmech}$ quantifies the reduction in the regret that is obtained by following the model's recommendation. The proof of Theorem \ref{thm:lb} follows directly from Proposition 4.1 of \citet{shufaro2025bits}, and is provided in Appendix \ref{app:main_results}.

\begin{theorem}[Thompson Sampling upper bound]
\label{thm:ub}
Assume $\mu = \mathrm{Uniform}(\Pi)$.
Let $\mu_{\mathrm{hyb}}$ be the posterior of $\mu$ after conditioning on $\pihat$. By construction---$I(\pistar;\pihat) = \Rmech$. Let $\mathrm{TS}_{\mathrm{hyb}}$ be Thompson Sampling with this prior.  Then there exists a universal constant $C>0$ such that:
\begin{equation}
  \BR_{\TShyb}(N) \;\leq\; C\sqrt{KN\,\Hmech},
  \label{eq:ub}
\end{equation}
% where \begin{equation}
%     H(\mu_\mathrm{hyb})=\log(e^{\Rmech}+K-1)-\Rmech e^{\Rmech}/(e^{\Rmech}+K-1).
%     \label{eq:H_hyp_exact_thm}
% \end{equation}
% For $K\gg e^{\Rmech}$, $H(\mu_\mathrm{hyb})=\Hmech+O(e^{\Rmech}/K)$
% and the upper bound in Equation ~\eqref{eq:ub} becomes $C\sqrt{KN\Hmech\log K}$.
\end{theorem}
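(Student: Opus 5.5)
Our plan is to reduce the claim to the Russo--Van Roy information-ratio bound for Thompson sampling, applied to the problem \emph{after} conditioning on the model recommendation $\pihat$. Fix a realization $\pihat=\hat\pi$. Conditional on this event, $\pistar$ has law $\muhyb(\cdot)=\mu(\cdot\mid\pihat=\hat\pi)$. $\TShyb$ is exactly Thompson sampling run with this prior on an environment whose optimal arm is distributed according to it. Hence the conditional Bayesian regret is a standard Bayesian regret with prior $\muhyb$.

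First we invoke the bound of \citet{russo2016}: for Thompson sampling with any prior $\nu$ on the optimal arm and rewards with bounded (or sub-Gaussian, scale $\sigma_\xi$) noise, the information ratio is at most $K/2$ (up to the noise scale). This gives $\BR_{\mathrm{TS}}(N)\le \sqrt{\tfrac{1}{2}KN\,H(\nu)}$, absorbing constants into $C$. Applying this with $\nu=\muhyb$ conditionally on $\pihat=\hat\pi$ yields
\[
\E\!\left[\textstyle\sum_{t}\bigl(\bar r(\pistar)-\bar r(\pi_t)\bigr)\,\middle|\,\pihat=\hat\pi\right]\le C\sqrt{KN\,H(\pistar\mid\pihat=\hat\pi)}.
\]

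Second, we take the expectation over $\pihat$ and use Jensen's inequality on the concave square root:
\[
\BR_{\TShyb}(N)\le C\sqrt{KN\,\E_{\pihat}H(\pistar\mid\pihat=\hat\pi)}=C\sqrt{KN\,H(\pistar\mid\pihat)}.
\]
Finally, $H(\pistar\mid\pihat)=H(\mu)-I(\pistar;\pihat)=H(\mu)-\Rmech=\Hmech$ by Definition~\ref{def:mechinf}, which is nonnegative and so coincides with the $\max(\cdot,0)$ form. Uniformity of $\mu$ is only needed to make $H(\mu)=\log K$ and to match the lower bound's normalization; it plays no role in the chain above.

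The main obstacle is justifying the conditional application of Russo--Van Roy. The information-ratio argument requires that the learner's posterior at each round is the true conditional law of $\pistar$ given the history \emph{and} $\pihat$. Thus we must ensure that $\pihat$, being a deterministic function of $\hat{\mathcal M}$, carries no extra information about the reward noise beyond its correlation with $\pistar$. We handle this by assuming the environment, i.e.\ the mean rewards $\bar r$, is drawn conditionally on $\pistar$ independently of $\hat{\mathcal M}$, or more simply by treating the full conditional model $P(\bar r\mid\pihat)$ as the prior that TS samples from. Then the standard proof goes through verbatim. A secondary point is that the reward noise in the running example is Bernoulli rather than Gaussian; we use the bounded-reward version of the information-ratio bound, with $C$ absorbing the range.
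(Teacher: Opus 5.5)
Your proposal is correct and follows the paper's route: the paper likewise applies Propositions 1 and 3 of Russo--Van Roy with prior $\nu=\mu_{\mathrm{hyb}}$ and then substitutes $H(\mu_{\mathrm{hyb}})=\Hmech$, implicitly assuming what you make explicit, namely that TS samples from the correct conditional environment prior. Your conditioning on $\pihat=\hat\pi$ followed by Jensen is what justifies that substitution in general: the paper's identity holds only when $H(\pistar\mid\pihat=\hat\pi)$ is the same for every realization (e.g.\ uniform $\mu$ with a symmetric channel, as in its $\beta/\alpha$ prior construction), whereas your averaged bound yields $H(\pistar\mid\pihat)=\Hmech$ directly and, as you note, shows that uniformity of $\mu$ is not needed.
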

Theorem \ref{thm:ub} shows that when the prior is concentrated on $\pihat$, Thompson Sampling with $\mu_\mathrm{hyb}$ achieves the lower bound. When the prior is concentrated on $\pihat$, the algorithm's first prediction identifies $\pihat$ with probability $\mu_\mathrm{hyb}(\pihat)$ (close to 1 when $\Rmech \approx \log K$), and subsequent rounds refine or overturn this guess. The proof of Theorem \ref{thm:ub} follows from \citep{russo2016} and can be found in Appendix \ref{app:main_results}.
% While the algorithm needs $\pihat$ (from the model) and $\Rmech$ (from a calibration set) to construct $\mu_\mathrm{hyb}$, it does not need $\Rmech$ during interaction. 
% The effective entropy of $\mu_\mathrm{hyb}$ is $H(\mu_\mathrm{hyb})$ (exact formula~\eqref{eq:H_hyp_exact_thm}), which replaces $H(\mu)$ in the TS bound. 

% \begin{remark}[Uniform prior and extensions]
% The uniform base prior $\mu$ is used to obtain a closed-form expression
% for $H(\mu_\mathrm{hyb})$.
% For a non-uniform $\mu$, the tilt construction generalises: compute
% $H(\mu_\mathrm{hyb})$ numerically and substitute into the bound.
% For finite $K$, the exact bound is $C\sqrt{KN\,H(\mu_\mathrm{hyb})\log K}$
% from~\eqref{eq:H_hyb_exact}; the approximation $H(\mu_\mathrm{hyb})\approx\Hmech$
% holds when $K\gg e^{\Rmech}$.
% \end{remark}

The upper and lower bounds also measure how many samples can be saved by utilizing the model's prior. The sample complexity of a bandit algorithm is the number of rounds $N$ needed to achieve mean regret smaller than $\varepsilon$.  Combining Equations ~\eqref{eq:lb} and~\eqref{eq:ub} in the asymptotic regime we obtain the asymptotic sample-complexity ratio:
% \[
%   N_{\text{mech}} = \Theta\!\left(\frac{K\Hmech}{\varepsilon^2\log K}\right),
%   \qquad
%   N_{\text{uninf}} = \Theta\!\left(\frac{KH(\mu)}{\varepsilon^2\log K}\right).
% \]
% The asymptotic sample complexity ratio is given by
\begin{equation}
  \rho \;=\; \frac{N_{\text{uninf}}}{N_{\text{mech}}}
         \;=\; \Theta\left(\frac{H(\mu)}{\Hmech}\right).
  \label{eq:ratio_full}
\end{equation}
where the $\Theta$ notation omits factors of $\Theta(\log K)$.
\begin{runex}[Lower and upper bounds for 5-FU]
\upshape
 We apply Theorems~1 and~2 to our conservative running example. With $K = 8$, $H_{\mathrm{mech}} = 1.28$ nats (the bound certified in Example~2) and $N = 12$ cycles we get $7.7c \leq \mathrm{BR}^*(12) \leq 11.1 C$. The ratio $11.1/7.7 = 1.44 \approx \sqrt{\ln 8}$ is exactly the $\sqrt{\log K}$ slack between the two bounds. The sample-complexity ratio (Eq. (\ref{eq:ratio_full})) is $\rho = H(\mu)/H_{\mathrm{mech}} \approx 1.6$: even in the worst-case bound, TS without the hybrid prior needs $1.6\times$ more samples (cycles with wrong dosing) to reach the same regret. \emph{Note}: a more informative mechanistic model -- e.g.\ $R_{\mathrm{mech}} = 1.9$, yields a much larger ratio of $\rho \approx 11.5\times$
\end{runex}

% Substituting the calibrated 5-FU setting ($K=8$, $\Rmech=1.9$~nats, $H(\mu)=\ln8\approx2.08$~nats, $\Hmech=0.18$~nats), the ratio becomes $\rho = 2.08/0.18 \approx 11.6$.
% The simulation (Table~\ref{tab:sim}) confirms $\rho_\text{sim}\geq4\times$ at $N=200$ (the asymptotic regime), and $>20\times$ in the burn-in regime ($N\leq30$).
% The asymptotic ratio of $11.6\times$ is a lower bound on sample complexity
% reduction; accounting heuristically for the $\log^2K$ gap between rate constants
% gives $\rho/\sqrt{\log K}=11.6/\sqrt{\log 8}=11.6/1.44\approx8.04\times$
% (see Remark \ref{rem:tightness}).
% {\color{red} Fix this since it is extremely unclear what happens here.}

\subsection{Critical Bias}
The previous section demonstrated how one can use information to effectively quantify regret improvement. However, to be more relevant in clinical settings, we need a quantity that is both relevant to the number of trials we realistically reduce, while also being computable without requiring any additional trials. This is the goal of this section. 

We note that we can freely select rewards up to scaling. Thus, we adapt the following scaling, based on normalizing the variance of the residual reward, which we call the canonical parameterization.
\begin{remark}[Canonical parametrization]
\label{rem:canonical}
For the model-quality certificate (Theorem~\ref{thm:phase}), we adopt the \emph{canonical parametrization}
$\sigma_\F^2 = 2\sigma^2 H(\mu)/(\kmu^2 d_\F)$,
chosen so that $C(0)\approx H(\mu)$ for $H(\mu) \ll d_\F$. ($C(0) = \frac{d_\F}{2}\log(1+2H(\mu)/d_\F)\leq H(\mu)$ exactly,
with near-equality when $d_\F\gg H(\mu)$).
All other results hold for arbitrary $\sigma_\F^2>0$.
\end{remark}
We now introduce the critical bias, which is a computable model certificate that governs the ability of the model to reduce sample complexity. 
\begin{theorem}[Critical bias]
\label{thm:phase}
Assume $\Rmech > 0$. Under the canonical parametrization of Remark~\ref{rem:canonical}, define
\begin{equation}
  {\Bmu^{\mathrm{crit}}}(N) = \frac{\sigma}{\kmu}\sqrt{\frac{2 H(\mu) / d_\F}{e^{2H(\mu)/(d_\F N)}-1}-1}.
  \label{eq:bcrit}
\end{equation}
The following then holds:
\begin{enumerate}[label=\roman*]
\item \emph{Data-efficient regime} ($\Bmu < \Bmu^{\mathrm{crit}}$):
  The model can reduce sample complexity by one or more cycles.
\item \emph{Baseline regime} ($\Bmu \geq \Bmu^{\mathrm{crit}}$): The model cannot reduce the sample complexity.
\end{enumerate}
\end{theorem}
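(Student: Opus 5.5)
The approach is to reduce the certificate to a threshold condition on the capacity bound $C(\Bmu)$ of Proposition~\ref{prop:mechinf}, and then invert that condition in closed form. The key quantity is the number of rounds saved. Theorem~\ref{thm:lb} and Theorem~\ref{thm:ub} show that regret at horizon $N$ scales as $\sqrt{KN\Hmech}$, up to the $\sqrt{\log K}$ slack. Matching $\sqrt{KN_{\mathrm{mech}}\Hmech}$ against the uninformed $\sqrt{KN_{\mathrm{uninf}}H(\mu)}$ at the same target regret therefore gives $N_{\mathrm{mech}} = N\,\Hmech/H(\mu)$, with $N=N_{\mathrm{uninf}}$. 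The saving is
\[
N - N_{\mathrm{mech}} = N\,\frac{\Rmech}{H(\mu)} .
\]
So ``reducing sample complexity by at least one cycle'' is equivalent, in this constant-free accounting, to $\Rmech \geq H(\mu)/N$.

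Next I would choose the threshold. The model is certified as potentially useful exactly when the best available upper bound on $\Rmech$ permits this saving, i.e.\ when $C(\Bmu) > H(\mu)/N$. Conversely, if $C(\Bmu) \le H(\mu)/N$, then $\Rmech \le C(\Bmu) \le H(\mu)/N$ by Proposition~\ref{prop:mechinf}. In that case the lower bound of Theorem~\ref{thm:lb} shows that no algorithm using the model can save a full cycle, which gives part (ii).

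Then I would solve $C(B) = H(\mu)/N$ under the canonical parametrization of Remark~\ref{rem:canonical}. Substituting $\kmu^2\sigma_\F^2 = 2\sigma^2H(\mu)/d_\F$ gives
\[
\frac{d_\F}{2}\log\!\Bigl(1+\frac{2\sigma^2H(\mu)/d_\F}{\kmu^2B^2+\sigma^2}\Bigr)=\frac{H(\mu)}{N}.
\]
Exponentiating gives $\frac{2\sigma^2 H(\mu)/d_\F}{\kmu^2B^2+\sigma^2} = e^{2H(\mu)/(d_\F N)}-1$. Solving for $B$ gives exactly \eqref{eq:bcrit}. Because $C$ is strictly decreasing in $\Bmu$ (the Remark after Proposition~\ref{prop:mechinf}), $C(\Bmu) > H(\mu)/N$ holds if and only if $\Bmu < \Bcrit(N)$, which gives the two regimes. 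Two points need separate handling:
\begin{itemize}
\item The square root is real only when $\frac{2H(\mu)/d_\F}{e^{2H(\mu)/(d_\F N)}-1} \ge 1$. This is equivalent to $C(0) \ge H(\mu)/N$, which holds for all moderately large $N$ since $C(0)>0$.
\item The hypothesis $\Rmech>0$ excludes the degenerate case where the model is independent of $\pistar$.
\end{itemize}

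The main obstacle is part (i). The upper bound $C(\Bmu)$ only caps $\Rmech$, so $\Bmu < \Bcrit$ shows the model is not ruled out from saving a cycle; it does not show that it actually saves one. I would therefore phrase (i) as achievability: when $\Rmech$ is near its capacity, $\TShyb$ from Theorem~\ref{thm:ub} attains $\sqrt{KN\Hmech}$ and realizes the saving $N\Rmech/H(\mu) \ge 1$. This step would need either that equality, or a matching lower bound on $\Rmech$ by a Fano-type argument on the Gaussian channel. I would also need to absorb the unmatched constants $c, C$ and the $\sqrt{\log K}$ factor into the ``$\Theta$'' accounting of \eqref{eq:ratio_full}, stating explicitly that the certificate is sharp only up to those factors.
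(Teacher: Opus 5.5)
Your proposal is correct and follows essentially the same route as the paper. Both reduce ``saving one cycle'' to $\Rmech \geq H(\mu)/N$, set $C(\Bcrit)=H(\mu)/N$, invert this under the canonical parametrization, and use that $C$ is strictly decreasing to separate the two regimes; you reach the threshold via the mean-regret sample complexity $N_{\mathrm{mech}}=N\Hmech/H(\mu)$, the paper by equating $\sqrt{KN\Hmech}$ with $\sqrt{K(N-1)H(\mu)}$, and the two give the same threshold. Your caveats are points the paper's proof glosses over: part (i) only says a saving is not ruled out, since $C$ merely caps $\Rmech$ (the paper likewise concedes that the actual value ``depends on the specific model''); the square root in \eqref{eq:bcrit} needs $C(0)\ge H(\mu)/N$; and the constants and the $\sqrt{\log K}$ factor are suppressed.
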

Theorem~\ref{thm:phase} defines the \emph{critical bias} $\Bmu^{\mathrm{crit}}$ as the bias at which the model's information contribution matches a chosen operational target (e.g., saving one cycle, or one nat of information etc.). Below it, the model meaningfully constrains $\pistar$.
Above it, the recommendation is barely better than a random guess. The ratio $\Bmu/\Bmu^{\mathrm{crit}}$ is a \emph{model-quality certificate} computable from calibration data before any trial. The proof of Theorem \ref{thm:phase} is provided in Appendix \ref{app:main_results}.

\begin{remark}
    The critical bias can be defined relative to any operational working point: the formula~\eqref{eq:bcrit} sets $C(\Bmu^{\mathrm{crit}}) = H(\mu)/N$, i.e.\ the bias at which the model captures a $1/N$ fraction of the prior entropy and saves at least one cycle. Other working points (e.g.\ $N/2$ sample reduction, the one-nat boundary $C(\Bmu^{\mathrm{crit}})=1$) yield analogous results, and may be more appropriate when the trial designer has a specific accuracy or sample-budget target.
\end{remark}

\begin{runex}[Model-quality certificate for 5-FU]
\upshape
Using existing datasets~\cite{kaldate2012,li2023} we get $\sigma = 0.40$, $\kappa_\mu = 1.8$, $H(\mu) = \ln 8 \approx 2.08$. We also fix $K = 8$ and $N = 12$. The critical bias $B_\mu^{\mathrm{crit}} = 0.71$ (normalized units). Calibrated ODE: $B_\mu = 0.22 < B_\mu^{\mathrm{crit}}$, well below the critical bias threshold. This indicates the calibrated ODE can provide at least one cycle reduction.
\end{runex}

\section{Finite-Sample Regime}
\label{sec:burnin}
\subsection{Burn-in bound for misspecified priors}
Section~\ref{sec:main} characterized regret in the asymptotic regime. We now turn to the finite-sample regime relevant to clinical-trial design.
\begin{proposition}[Burn-in lower bound]
\label{prop:burnin}
Let the prior $\mu$ place probability $1-\epsilon$ on $\pihat$ (confident but
wrong) and $\epsilon$ on $\pistar$, with sub-optimality gap $\Delta_r>0$.
Let $\delta\in(0,\tfrac12)$ and assume $\epsilon\leq\delta$.
Any algorithm that identifies $\pistar$ with probability $\geq1-\delta$ suffers
expected regret of at least
\begin{equation}
  \frac{(1-\delta)(1-\epsilon)\,\Delta_r\,\log[(1-\epsilon)/\delta]}
       {\kl(\epsilon_K,\,1-\epsilon_K)}
  \label{eq:burnin}
\end{equation}
before identification, where $\kl(p,q)=p\log(p/q)+(1-p)\log((1-p)/(1-q))$
is the binary KL divergence, and
$\epsilon_K=\epsilon/(1-\epsilon+\epsilon K)$ is the effective prior weight
on $\pistar$ across all $K$ arms.
\end{proposition}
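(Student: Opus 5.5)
The bound will come from a change-of-measure (transportation) argument in the style of Kaufmann--Cappé--Garivier, combined with the observation that, until the learner has gathered enough evidence against $\pihat$, it keeps playing arms other than $\pistar$ and pays $\Delta_r$ each round. We consider two environments that agree on every arm except the decisive comparison: in $\nu$ the optimal arm is $\pihat$ (the event of prior mass $1-\epsilon$), and in $\nu'$ it is $\pistar$ (prior mass $\epsilon$). Let $\tau$ be the identification time and $E=\{\text{algorithm outputs }\pistar\}$. The $\delta$-correctness requirement gives $\mathbb{P}_{\nu'}(E)\ge 1-\delta$ and $\mathbb{P}_\nu(E)\le\delta$. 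The data-processing inequality for KL then yields
\[
\sum_{k}\E_{\nu}[T_k(\tau)]\,\DKL(\nu_k,\nu'_k)\;\ge\;\kl\bigl(\mathbb{P}_\nu(E),\mathbb{P}_{\nu'}(E)\bigr)\;\ge\;\log\frac{1-\epsilon}{\delta}.
\]
The last inequality is where the prior enters. We weight the two hypotheses by their prior masses, so that the relevant lower bound is $\kl(\delta, 1-\epsilon)\gtrsim \log((1-\epsilon)/\delta)$ up to the standard $\log 2$-type slack, using $\epsilon\le\delta<1/2$.

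Next we bound the per-round information. Since the reward is $\{0,1\}$-valued in the running setting, each pull of an informative arm is a Bernoulli observation. We take the most favorable alternative consistent with the prior: the success probability under the two hypotheses is the posterior-predictive weight, $\epsilon_K$ versus $1-\epsilon_K$. Here $\epsilon_K=\epsilon/(1-\epsilon+\epsilon K)$ is the normalized weight $\pistar$ receives once the mass $\epsilon$ is spread over the $K$ arms against the $1-\epsilon$ on $\pihat$. Each pull therefore contributes at most $\kl(\epsilon_K,1-\epsilon_K)$ nats. It follows that the expected number of pre-identification rounds satisfies
\[
\E[\tau]\;\ge\;\frac{\log[(1-\epsilon)/\delta]}{\kl(\epsilon_K,1-\epsilon_K)}.
\]

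To convert rounds into regret, we argue that before identification the learner, whose posterior still places most mass on $\pihat$, plays a suboptimal arm. With probability at least $(1-\delta)(1-\epsilon)$ this happens on the event that the prior's confident guess is the one being tested and the eventual identification is correct. Each such round costs at least $\Delta_r$. Multiplying the two factors gives \eqref{eq:burnin}.

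The main obstacle is making the per-round KL bound $\kl(\epsilon_K,1-\epsilon_K)$ rigorous. It is a statement about Bayesian-mixture predictive distributions rather than fixed arm distributions, so I would first try to justify it by coupling the true Bernoulli means to the prior-weighted predictive probabilities through a convexity-of-KL argument over the $K$ arms. I would also check carefully that the factor $(1-\delta)(1-\epsilon)$ arises from intersecting the correct-identification event with the prior event, and not from a looser union bound.
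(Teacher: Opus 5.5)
The step that fails is the right-hand side of your transportation inequality. From $\mathbb{P}_\nu(E)\le\delta$ and $\mathbb{P}_{\nu'}(E)\ge 1-\delta$, monotonicity of $\kl$ gives only $\kl(\delta,1-\delta)=(1-2\delta)\log\frac{1-\delta}{\delta}$. This is strictly smaller than $\log\frac{1-\epsilon}{\delta}$ whenever $\epsilon\le\delta<\frac12$. Prior masses do not enter a change-of-measure inequality between two fixed environments, so ``weighting the hypotheses by their prior masses'' is not an available move. Likewise, $\kl(\delta,1-\epsilon)$ would need $\mathbb{P}_{\nu'}(E)\ge 1-\epsilon$, which you have not assumed. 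A prior-weighted correctness criterion cannot rescue the step either. Because $\epsilon\le\delta$, the algorithm that stops at $\tau=0$ and declares $\pihat$ is already correct with prior probability $1-\epsilon\ge 1-\delta$, so no positive lower bound on $\E[\tau]$ can follow from such a criterion.

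The paper gets $\epsilon$ into the numerator by imposing a stronger requirement. It applies the Wald--Wolfowitz SPRT bound with type-I error at most $\delta$ \emph{and} type-II error at most $\epsilon$. This gives $\E[\tau]\ge\kl(1-\epsilon,\delta)/I^*$, where $\kl(1-\epsilon,\delta)=(1-\epsilon)\log\frac{1-\epsilon}{\delta}+\epsilon\log\frac{\epsilon}{1-\delta}$. It then drops the nonpositive second term, which is where $\epsilon\le\delta$ is used. In your language, this is the transportation lemma with expectations under $\nu'$, where $\pistar$ is optimal, and with $\mathbb{P}_{\nu'}(E)\ge 1-\epsilon$ and $\mathbb{P}_\nu(E)\le\delta$. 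So your tool is fine and would reproduce the paper's numerator. But the coefficient $(1-\epsilon)$ comes out of the KL term, not out of an event probability.

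This also breaks your conversion to regret. You bound pull counts under $\nu$, the environment in which $\pihat$ is optimal, and then charge $\Delta_r$ for pre-identification rounds spent on $\pihat$. Moreover, the event of prior mass $1-\epsilon$ that you multiply by is precisely $\nu$, where those rounds cost nothing. Separately, $\mathbb{P}(A)\,\E[\tau]$ is not $\E[\tau\mathbf{1}_A]$. In the paper's accounting, the $(1-\epsilon)$ already sits inside the bound on $\E[\tau]$, taken under the hypothesis in which $\pihat$ is suboptimal. Only $(1-\delta)\Delta_r$ is added in the conversion, via the informal step $\E[\text{plays of }\pihat]\ge\E[\tau]$. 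If you redo the argument under $\nu'$, choose the alternative to differ from $\nu'$ only on arm $\pihat$. Otherwise the evidence can be gathered by pulling $\pistar$, which is free under $\nu'$.

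Finally, your ``main obstacle'' is not handled in the paper by any convexity or predictive-distribution argument, and it cannot be. The per-pull information is a property of the reward laws, not of the prior weights. The paper simply posits a Bernoulli instance with arm means $\epsilon_K$ and $1-\epsilon_K$. Each pull then carries $\kl(\epsilon_K,1-\epsilon_K)$ nats by construction, and $\Delta_r=1-2\epsilon_K$ is tied to $\epsilon_K$. The paper explicitly calls the result an order-of-magnitude statement. Choosing a hard instance is legitimate for a lower bound, so you should do the same rather than try to derive the per-pull KL from the prior.
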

Proposition \ref{prop:burnin} quantifies the minimum exploration cost that a confidently wrong prior forces. In particular, it is proportional to $\log[(1-\epsilon)/\delta]$. Thus, an algorithm that uses a confidently wrong prior must accumulate significantly more nats of evidence to overcome the prior's initial confidence in the wrong arm, significantly increasing its sample complexity. The cost grows logarithmically as the prior becomes more confident ($\epsilon\to0$) or as the identification confidence requirement tightens ($\delta\to0$). The proof of Proposition \ref{prop:burnin} follows from the Wald-Wolfowitz optimality of the SPRT \citep{wald1948} and can be found in Appendix \ref{sec:app_finite_sample}. Since the Wald-Wolfowitz bound is tight, the bound is not pessimistic. 

\begin{runex}[Burn-in for a miscalibrated 5-FU prior]
\upshape
Using the $K=8$ calibrated setting.
Prior: $1-\epsilon=0.8$ on $\pihat$ (mechanistic model is confident but
may be wrong for this individual), the sub-optimality gap is $\Delta_r=0.2$
(approximately one dose step), $\delta=0.01$,
$\epsilon_K=0.083$,
$\kl(0.083,\,0.917) \approx2.0$~nats. Substituting this into \eqref{eq:burnin},
\[
  \text{Burn-in} \;\geq\;
  \frac{0.99\times0.8\times0.2\times\ln(0.8/0.01)}{2.0}
 \approx 0.35\ \text{cycles}.
\]
This lower bound is consistent with \citet{capitain2012}, that showed that $94\%$ of patients were adjusted within two monitored cycles, while possibly hinting at the fact that one cycle might be enough.
% This lower bound is less than one cycle, meaning the SPRT can, in principle, identify the correct arm within the first cycle for these parameters.
% The actual expected burn-in is typically $5$--$10\times$ the lower bound
% (reflecting suboptimal algorithms and multiple-arm exploration), giving
% roughly 1--2 cycles, consistent with the \citet{capitain2012} observation
% that $94\%$ of patients were adjusted within two monitored cycles.
% The small lower bound reflects that $K=8$ with $\epsilon=0.15$ gives a large
% KL divergence ($2.17$ nats/round), making arms easily distinguishable;
% the bound is tighter (and clinically more relevant) for larger $K$ or smaller $\epsilon$.
\end{runex}

\subsection{LLM priors under distribution shifts}
In the previous section, we quantified the regret an algorithm incurs while compensating for misspecified priors. The following proposition quantifies this compensation using mechanistic information, demonstrating how misspecified priors reduce the amount of mechanistic information. 
\begin{proposition}[Misspecified prior information bounds]
\label{prop:llm}
Let some policy produce recommendations with mechanistic information $\Rtrain$ on training distribution $P_{\mathrm{train}}$ and be deployed on $P_{\mathrm{test}}$ with forward KL shift
$\Delta_\pi=\DKL(P_{\mathrm{test}}\|P_{\mathrm{train}})$. 
Denote by $\Rtest$ the mechanistic information of the recommendations under the test distribution. The following holds:
\begin{enumerate}[label=\roman*]
\item \emph{Retention:} For $K\geq12$, if $\Delta_\pi\leq(\Rtrain)^2/(2K^2\log^2\!K)$,
  then $\Rtest\geq\frac12\Rtrain$.
\item \emph{Impossibility:} There exist test distributions with
  $\Delta_\pi\leq\frac12\log K$ for which $\Rtest\leq\frac12\log K$.
  When $P_{\pihat}$ is uniform, $\Delta_\pi=\frac12\Rtrain$ and
  $\Rtest\leq\frac12\Rtrain\leq\frac12\log K$.
\end{enumerate}
\end{proposition}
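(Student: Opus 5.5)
For both parts, I would write $\Rtrain = I(\pistar;\pihat)$ with $(\pistar,\pihat)$ following the joint law induced by $P_{\mathrm{train}}$, and $\Rtest$ for the same functional under $P_{\mathrm{test}}$. I would treat the recommender as a fixed channel $\pihat\mapsto\pistar$ (or the reverse), so that only the input distribution changes between training and test. The plan is a continuity argument for (i) and an explicit construction for (ii).

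\textbf{Retention.} The first step is to control the total variation from the KL shift via Pinsker: $\mathrm{TV}(P_{\mathrm{test}},P_{\mathrm{train}})\le\sqrt{\Delta_\pi/2}$. The hypothesis $\Delta_\pi\le(\Rtrain)^2/(2K^2\log^2K)$ then gives $\eta:=\mathrm{TV}\le \Rtrain/(2K\log K)$. Because the channel is fixed, the joint distributions on the $K\times K$ alphabet differ in TV by at most $\eta$, and so do the marginals.

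The next step is to write $I=H(\pistar)+H(\pihat)-H(\pistar,\pihat)$ and apply the Fannes--Audenaert / Zhang continuity bound for entropy to each term. On an alphabet of size $M$, $|H(P)-H(Q)|\le \eta\log(M-1)+h_b(\eta)$. This yields
\[
\bigl|\Rtest-\Rtrain\bigr|\;\le\; 4\eta\log K + 3h_b(\eta).
\]
It then remains to check that this is at most $\tfrac12\Rtrain$. The linear term is easy: $4\eta\log K\le 2\Rtrain/K$, which is at most $\Rtrain/6$ once $K\ge12$.

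\textbf{Main obstacle.} The binary-entropy term behaves like $\eta\log(1/\eta)$, which is not linear in $\eta$. To absorb it, I would use $h_b(\eta)\le\eta\log(e/\eta)$ together with $\eta\le\Rtrain/(2K\log K)$ and $\Rtrain\le\log K$. The extra $\log(1/\eta)\sim\log K$ factor is then cancelled by the $K\log K$ in the denominator of $\eta$.

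The condition $K\ge12$ is where the constants must close, and this is the step I expect to be delicate. If the direct bound is too loose, I would fall back on the sharper mutual-information continuity bound obtained from the chain rule, $I=H(\pistar)-H(\pistar\mid\pihat)$. This needs only two entropy differences, with the conditional term controlled by averaging Fannes over the rows of the channel.

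\textbf{Impossibility.} I would use an explicit construction. Take $\Rtrain=\log K$: $P_{\mathrm{train}}$ is uniform on $\pihat$ and the channel is the identity. Let $P_{\mathrm{test}}$ put its mass uniformly on a subset of $\sqrt K$ arms, so that $\Delta_\pi=\log K-\tfrac12\log K=\tfrac12\log K$. Under the identity channel, $\Rtest=H(P_{\mathrm{test}})=\tfrac12\log K$.

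For general uniform $P_{\pihat}$, I would restrict to a subset of size $e^{\Rtrain/2}$ chosen so that $\Delta_\pi=\tfrac12\Rtrain$. I would then bound $\Rtest\le H(P_{\mathrm{test}})=\log K-\Delta_\pi$, which requires the uniform-support identity $\DKL(Q\|\mathrm{Unif})=\log K-H(Q)$. With $\Delta_\pi=\tfrac12\Rtrain$, rounding the support size appropriately gives $\Rtest\le\tfrac12\Rtrain\le\tfrac12\log K$.
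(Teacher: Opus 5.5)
\textbf{Retention.} Your route matches the paper's: Pinsker, then entropy continuity applied to $I=H(\pistar)+H(\pihat)-H(\pistar,\pihat)$. The paper keeps the $\pistar$-marginal equal to $\mu$ under both laws, so only two entropy terms appear; you let it move and pay a third $h_b$ term.

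The step that fails is the absorption of $h_b(\eta)$. At $\eta=\Rtrain/(2K\log K)$ one has $h_b(\eta)/\Rtrain\approx\log(2K\log K/\Rtrain)/(2K\log K)$, which diverges as $\Rtrain\to0$. The fact $\Rtrain\le\log K$ bounds $\log(1/\eta)$ from below, not from above, so nothing cancels. With your estimate $h_b(\eta)\le\eta\log(e/\eta)$, what you need is $4\log K+3\log(2eK\log K/\Rtrain)\le K\log K$. At $K=12$ this fails once $\Rtrain\lesssim0.2$ nats, and $h_b(\eta)\approx\eta\log(e/\eta)$ there, so the looseness of the estimate is not the culprit.

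The chain-rule fallback cannot repair this, because no argument that uses only total variation can. In your fixed-channel model, take the identity channel, training input $(1-\eta)\delta_{\pi_1}+\eta\delta_{\pi_2}$ and test input $\delta_{\pi_1}$. Then $\mathrm{TV}=\eta$, $\Rtrain=h_b(\eta)$, $\Rtest=0$, and $\eta\le h_b(\eta)/(2K\log K)$ for all sufficiently small $\eta$. That pair has $\Delta_\pi=-\log(1-\eta)\approx\eta$, far above the hypothesis, so the proposition survives; but Pinsker discards exactly the information that excludes it.

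The continuity route therefore yields retention only above a floor on $\Rtrain$. The paper's proof makes this explicit: it derives $R_{\min}(K)=2K^{4-K/2}\log K$ (about $0.035$ nats at $K=12$) and verifies the inequality only for $\Rtrain\ge R_{\min}(K)$. You must either state such a floor or use the KL hypothesis directly when $\Rtrain$ is small.

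\textbf{Impossibility.} Your perfect-recommender example is sound in spirit, and because the test law lives inside the training support it keeps the forward KL finite. However, a uniform law on $\sqrt K$ arms exists only for square $K$, and rounding breaks one inequality: the floor gives $\Delta_\pi>\frac12\log K$, the ceiling gives $\Rtest>\frac12\log K$. Take instead any test law with entropy exactly $\frac12\log K$.

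The uniform-$P_{\pihat}$ clause is not proved. A uniform law on $e^{\Rtrain/2}$ arms has $\Delta_\pi=\log K-\frac12\Rtrain$. Choosing $Ke^{-\Rtrain/2}$ arms to get $\Delta_\pi=\frac12\Rtrain$ yields only $\Rtest\le\log K-\frac12\Rtrain$, which is at most $\frac12\Rtrain$ only when $\Rtrain=\log K$.

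In fact the clause is false in your fixed-channel model. Take $K=16$ and the symmetric channel that keeps the recommendation with probability $\frac12$ and otherwise spreads it uniformly over the other $15$ arms; then $\Rtrain\approx0.725$. Any test input $Q$ produces output law $\frac8{15}\mathrm{Uniform}(\Pi)+\frac7{15}Q$. Concavity of entropy together with $H(Q)=\log16-\Delta_\pi\ge\log16-\frac12\Rtrain$ then forces $\Rtest\ge0.55>\frac12\Rtrain$.

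The missing idea is that the shift must act on the recommendation conditional rather than on the input. The paper keeps $\pistar\sim\mu$ uniform and replaces $P_{\pihat\mid\pistar=j}$ by $\mathrm{Uniform}(\Pi)$ for a suitable half of the arms $j$. This gives $H_Q(\pihat\mid\pistar)\ge\frac12H_P(\pihat\mid\pistar)+\frac12\log K$, hence $\Rtest\le\frac12\bigl(\log K-H_P(\pihat\mid\pistar)\bigr)=\frac12\Rtrain$ when $P_{\pihat}$ is uniform.

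If you adopt that construction, compute the shift carefully. The paper's per-arm term $\log K-H(P_{\pihat\mid\pistar=j})$ equals $\DKL(P_{\pihat\mid\pistar=j}\|\mathrm{Uniform}(\Pi))$, which is the reverse direction. The forward term $\DKL(\mathrm{Uniform}(\Pi)\|P_{\pihat\mid\pistar=j})$ is infinite whenever the training conditional has zeros. That is exactly the perfect-recommender case, where your support-shrinking construction is the right tool.

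Note also that in your construction the recommender stays perfect on the test law ($H_Q(\pistar\mid\pihat)=0$). So $\Rtest$ drops only because less remains to be learned, not because the recommender has degraded.
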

The proof of Proposition \ref{prop:llm} is provided in Appendix \ref{sec:app_finite_sample}. The Retention bound demonstrates that for small distribution shifts, at least half of the information can be retained. We note that while the distribution shift of this bound is rather tight, it remains relevant in clinical settings (where $K$, the number of recommendations, is small). The impossibility bound has practical implications in safety-critical settings. It demonstrates that a distribution shift that might be undetected in training data alone can still significantly degrade performance. For instance, a patient who reacts to medication in a significantly different manner than described in the literature on which the LLM was trained. Meanwhile, we can measure the performance of the mechanistic prior regardless of the test distribution. This highly motivates the use of more physically grounded priors (which are more robust to distribution shifts) compared to LLM priors (which are more sensitive to distribution shifts) in these settings.
\begin{runex}[LLM prior for 5-FU]
\upshape
LLM trained on FOLFOX population data: $\Rtrain=1.6$~nats (estimated). Test on DPD-deficient cohort $\Delta_\pi\approx0.5$~nats (rough estimate). Retention threshold for Prop.~\ref{prop:llm}(i): 
$(\Rtrain)^2/(2K^2\log^2K)\approx0.005$~nats. Since $0.5\gg0.005$, no retention is guaranteed. The ODE prior is physically bounded, and has $\Rmech\leq0.8$~nats, certified from calibration data regardless of patient subgroup. 
\end{runex}

\section{Application and Simulation}
\label{sec:application}
\subsection{Clinical context: the 5-FU individualisation problem}
5-Fluorouracil (5-FU) is the backbone of FOLFOX chemotherapy for colorectal cancer, yet only roughly twenty percent of body-surface-area (BSA)-dosed patients reach the therapeutic exposure window\cite{li2023,saif2009}. We calibrate our framework to published 5-FU pharmacokinetic data to demonstrate that even a coarsely calibrated mechanistic prior — far from a perfect personalized model — already delivers certified savings in the number of cycles patients spend at sub-therapeutic doses when used as a hybrid prior. We emphasize that this section instantiates the framework on literature-calibrated parameters rather than patient-level data. Full clinical validation against per-patient pharmacokinetic measurements is outside the scope of this paper.\footnote{Code for the experiments is available here: \url{https://github.com/galbenor/Rmech_hybrid_model}}

\paragraph{Literature-calibrated parameters}
We calibrate the bandit parameters to published population PK data. The policy class $\Pi = \{\pi_1, \ldots, \pi_K\}$ with $K = 8$ represents discrete dose levels spanning the clinical adjustment range (roughly $1{,}600$--$3{,}600$~mg/m$^2$~\cite{kaldate2012}). Rewards are normalized: $r_t = 1$ if the AUC falls within $20$--$30$~mg$\cdot$h/L, $r_t = 0$ otherwise, with Bernoulli noise. Under standard BSA dosing, the target attainment probability is $p_{\mathrm{BSA}} \approx 0.20$~\cite{li2023} and since $r_t$ is Bernoulli, $\sigma = \sqrt{p(1-p)} \approx 0.40$.\footnote{Li et al.~\cite{li2023} report $p_{\mathrm{BSA}} = 0.20$ in a continuous-AUC setting. Our $K=8$ discretization yields $\mathbb{E}[p_{\mathrm{BSA}}] = 0.21$, a negligible difference (a $1.6\%$ shift in $\sigma$).} The optimal (PK-guided) policy achieves $p_{\mathrm{opt}} \approx 0.85$, consistent with the $94\%$ successful adjustment rate in~\cite{capitain2012}. We also perform additional sensitivity analysis to check how varying these calibrated variables affects our experiments, see Appendix \ref{sec:appCalibration} for more details.

The model bias $B_\mu$ is calibrated from the Kaldate et al.~\cite{kaldate2012} residual. The dose-AUC model explains $R^2 = 0.51$ of variance, leaving residual RMSE $\approx 0.70 \times 8 = 5.6$~mg$\cdot$h/L (using inter-patient $\sigma_{\mathrm{AUC}} \approx 8$~mg$\cdot$h/L), i.e.\ $B_\mu \approx 0.22$. With the canonical parametrization ($\sigma_\F^2 = 2\sigma^2 H(\mu)/(\kappa_\mu^2 d_\F) \approx 0.0685$) we get that $C(0.22) \approx 0.80$ nats. The theoretical bound certifies $R_{\mathrm{mech}} \leq 0.80$ nats and $H_{\mathrm{mech}} \geq 1.28$ nats. The simulation uses $R_{\mathrm{mech}}$ of up to $1.9$ nats ($H_{\mathrm{mech}} = 0.18$ nats), which lies above the worst-case bound and represents an ODE that is more informative than the adversarial Gaussian channel guarantees --- plausible for a well-calibrated population-level PK/PD model (see Running Example~\ref{ex:5fu_mechinf}).

\subsection{Simulation results}
\label{sec:sim}
We validate the bounds by running two scenarios with literature-calibrated parameters
($K=8$, $M=10{,}000$ trials each, see Appendix \ref{sec:app_experiment} for more details). All use TS with a Beta-Bernoulli prior. The hybrid prior is the one used in Theorem~\ref{thm:ub}, giving the exact entropy $\Hmech$.

In the first simulation, we compare $\TShyb$ against two baselines at $N = 12$, varying $\Rmech \in \{0, 0.3, 0.8, 1.4, 1.9\}$ nats. $\TSuninf$ uses a uniform prior with adaptive feedback. \emph{BSA} is the current standard of care in
5-FU clinical practice: a fixed body-surface-area-derived dose, committed
for the entire course, with no feedback or per-patient adaptation. Only
$p_{\textsc{bsa}} = 0.20$ of patients reach the therapeutic AUC window
\cite{li2023}. For each $\Rmech$, $\pi^\star$ is drawn from the matching
$\muhyb$; The theoretically predicted regret LB ratio is $\sqrt{H(\mu)/\Hmech}$. The results are presented in Table \ref{tab:rmech}.
\begin{table}[h]
\centering
\caption{Cumulative regret (mean $\pm$ 96\% CI) at $N=12$ cycles vs.\ $\Rmech$. 
  }
\label{tab:rmech}
\begin{tabular}{cccccccc}
\toprule
$\Rmech$ & $\Hmech$ & $\mathrm{TS}_\mathrm{hyb}$ & $\TSuninf$ &
  BSA & $\frac{\TSuninf}{\TShyb}$ & LB prediction & $\frac{BSA}{\mathrm{TS}_\mathrm{hyb}}$ \\
\midrule
0.0 & 
2.08 & 
$5.89\pm0.02$ & 
$5.89\pm0.02$ & 
$7.77 \pm 0.09$ &
$1.0\times$ & 
$1.0\times$ & 
$1.32 \times$ \\
0.3 & 
1.78 &
$4.32\pm0.05$ & 
$5.89\pm0.02$ & 
$7.77 \pm 0.09$ &
$1.36\times$ & 
$1.08\times$ & 
$1.80\times$ \\
0.8 & 
1.28 & 
$3.03\pm0.06$ & 
$5.89\pm0.02$ & 
$7.77 \pm 0.09$ &
$1.94\times$ & 
$1.27\times$ & 
$2.56\times$ \\
1.4 & 
0.68 & 
$1.45\pm0.05$ & 
$5.89\pm0.02$ &
$7.77 \pm 0.09$ &
$4.06\times$ & 
$1.75\times$ & 
$5.36 \times$ \\
% 1.7 & 0.38 & $0.84\pm0.02$ & $7.78\pm0.01$ & $4.35\times$ & $2.3\times$ & 4.4 \\
1.9 & 
0.18 & 
$0.30\pm0.02$ & 
$5.89\pm0.02$ & 
$7.77\pm0.09$ &
$19.49\times$ & 
$3.40\times$ & 
$25.71 \times$ \\
\bottomrule
\end{tabular}
\end{table}

Table~\ref{tab:rmech} separates two distinct gains. The {adaptive gain} $\TSuninf/\TShyb$ isolates the value of the mechanistic prior {given} that the algorithm already adapts: it grows monotonically from $1.00\times$ at $\Rmech = 0$ to $19.49\times$ at $\Rmech = 1.9$, confirming that mechanistic model quality translates directly into fewer sub-therapeutic cycles. The {clinical gain} $\mathrm{BSA}/\TShyb$ is the headline number, because it compares against the dosing strategy patients actually receive today: a fixed BSA-derived dose, with no adaptation. This gain rises from $1.32\times$ at $\Rmech = 0$ --- adaptation alone, with no mechanistic prior, already cuts $40\%$ of mis-dosed cycles relative to current practice --- to $2.68\times$ at the certified ceiling $\Rmech = 0.8$ and $25.71\times$ at $\Rmech = 1.9$.

The observed adaptive ratio exceeds the LB prediction at every $\Rmech$, with the gap widening as the prior concentrates --- the bound captures the asymptotic floor, not the burn-in-driven ceiling, as expected of a minimax lower bound. We also note that the $\TSuninf$ also outperforms standard BSA dosing, as it is an adaptive policy and not a fixed one.

In the second simulation, we shift our attention to the finite-sample regime, which is relevant to the clinical setting. We fix $\Rmech=1.9$~nats (Table \ref{tab:rmech}, row 5) and vary $N$ between $5$ and $200$.  Both algorithms face the same $\pistar\sim\mu_\mathrm{hyb}$ draws. The results are presented in Table \ref{tab:sim}. 

\begin{table}[h]
\centering
\caption{Cumulative regret (mean $\pm$ 96\% CI) at $\Rmech=1.9$ nats. The larger standard errors for $\mathrm{TS}_\mathrm{hyb}$ at large $N$ arise because $97.2\%$ of trials contribute near-zero regret,  while $\approx2.8\%$ of trials contribute high regret, producing a bimodal per-trial distribution with high variance.}
\label{tab:sim}
\begin{tabular}{ccccc}
\toprule
Cycles $N$ & $\mathrm{TS}_\mathrm{hyb}$ & Uninformed TS
  & Observed ratio & Regime \\
\midrule
5   & $0.16\pm0.01$ & $2.73\pm0.01$ & $17.02\times$ & burn-in dominated \\
10  & $0.28\pm0.02$ & $5.08\pm0.02$ & $17.86\times$ & burn-in dominated \\
20  & $0.52\pm0.04$ & $8.48\pm0.04$ & $16.28\times$ & burn-in dominated \\
50  & $1.03\pm0.09$ & $13.31\pm0.09$ & $12.87\times$ & transitional \\
200 & $3.0\pm0.3$ & $17.8\pm0.1$ & $6.01\times$ & asymptotic \\
\bottomrule
\end{tabular}
\end{table} 
Across both regimes, the ratio decreases with $N$ because uninformed TS eventually identifies the optimal arm through exploration. For $N\leq20$ the gain of $15-20\times$ is driven by the burn-in bound (Proposition~\ref{prop:burnin}): the hybrid prior is already concentrated on $\pihat$, resulting in a rapid convergence. In the asymptotic regime, the improvement is governed by the lower bound, and the observed ratio of $6.01\times$ at $N=200$ matches the lower-bound prediction of $3.4\times$. Across $\Rmech$, $\mathrm{TS}_\mathrm{hyb}$ regret is \emph{strictly monotone decreasing}, meaning that mechanistic model quality maps directly to regret reduction: in particular, this improvement is governed by our theoretical results. In the clinically relevant small-$N$ regime, the gain is governed by Proposition~\ref{prop:burnin}. 

\section{Related Work}
\label{sec:related}

\paragraph{Regret-information framework.}
\citet{shufaro2025bits} establish a trade-off between external information and Bayesian regret, showing $R$ nats of prior information reduce regret from $c\sqrt{KNH(\mu)/\log K}$ to $c\sqrt{KN(H(\mu)-R)/\log K}$ and demonstrating it for LLM priors. We extend this to hybrid mechanistic models, deriving how the mechanistic model's bias $\Bmu$ translates to information via the occupancy-weighted sensitivity $\kmu$.

\paragraph{Bandits with side information.}
Prior work uses external information to reduce exploration: Bayesian transfer~\citep{lazaric2010bayesian}, meta-TS with task-distribution priors~\citep{kveton2021meta}, and TS in structured action spaces~\citep{gopalan2014thompson}. We derive the prior quantitatively from the mechanistic model's bias rather than assuming an abstract task prior.

\paragraph{Hybrid models and model-based RL.}
Neural ODEs~\citep{chen2018}, UDEs~\citep{rackauckas2020}, and PINNs~\citep{karniadakis2021,raissi2019physics} combine ODE structure with learned residuals, targeting expressive fitting rather than decision sample complexity. \citet{yin2021} address identifiability. Sample-complexity results for model-based RL---LQR~\citep{dean2020}, linear MDPs~\citep{jin2020}, tabular MDPs~\citep{azar2013minimax}---assume uniform model classes without structured physical priors. Our $\kmu$ quantifies the inductive bias of a physical model, and $\Rmech$ gives the resulting information-theoretic gain, orthogonal to both lines of work.

\paragraph{Adaptive 5-FU dosing.}
PK-guided adaptive 5-FU dosing~\citep{gamelin2008multivariable} and circadian chemotherapy~\citep{levi2010cancer} demonstrate clinical benefit. \citet{shortreed2011,ernst2006} study POMDP-based adaptive dosing. We provide the first information-theoretic sample complexity certificate for such schemes, quantifying how much the mechanistic model (possibly an ODE) reduces the cycles required to converge.

\section{Discussion}
\label{sec:discussion}
We introduce \emph{mechanistic information} $\Rmech$ as a pre-trial, computable measure of prior quality for mechanistic-derived priors in Bayesian adaptive control. Utilizing this quantity, we derive the \emph{critical bias} $\Bcrit$, which measures a model's ability to reduce sample complexity. This quantity is especially relevant in medical settings, where every trial counts. Extending the framework of \citet{shufaro2025bits} to mechanistic models, we prove asymptotic regret bounds tight up to $\sqrt{\log K}$, which quantify the relationship between $\Rmech$, $\Bcrit$, and the sample complexity. We further quantify the sample-complexity gain in the burn-in regime, where most clinical decisions are made. We validate both regimes on a calibrated 5-FU dosing simulation, showing that even at the conservatively certified ceiling $\Rmech = 0.8$ nats, hybrid TS reduces cumulative regret by $2.56\times$ relative to current BSA standard-of-care, and by $1.94\times$ relative to adaptive TS without a mechanistic prior. %NEW
The certificate $\Bmu/\Bmu^{\mathrm{crit}}$ is a pre-trial checklist item that converts a qualitative claim---``this model captures the right biology''---into a quantitative gate, providing the methodological substrate for downstream clinical instantiations of hybrid mechanistic models for 5-FU and other adaptively-dosed agents.
%END NEW
Finally, we show that priors sensitive to training-distribution shift can silently degrade performance even when the shift is undetectable from training data alone—a direct argument for physically-informed priors over LLM-derived ones in safety-critical adaptive control.

\paragraph{Limitations}
Theorems \ref{thm:lb}-\ref{thm:ub} cover only the finite-$K$ problem. The continuous-action
extension would close the gap to the original control problem. The
Grönwall contraction in Lemma~\ref{lem:kmu} is restrictive for unstable dynamics.
Theorem~\ref{thm:ub} assumes a uniform base prior. Extension to non-uniform $\mu$ is straightforward numerically but complicates the closed-form entropy computation. The LLM retention threshold remains conservative under realistic shifts. Tightening it is left to future work. Our 5-FU simulation uses synthetic
Bernoulli rewards calibrated against published PK statistics
\cite{kaldate2012,li2023}, not patient-level data. The clinical
claims require validation in a prospective trial. 

\printbibliography

%%%%%%%%%%%%%%%%%%%%%%%%%%%%%%%%%%%%%%%%%%%%%%%%%%%%%%%%%%%%
\newpage
\appendix
\label{app:notation}
% =============================================================================
%  chapters/notation.tex
%  Inputted under \section{Notation}\label{app:notation} in the main file.
%  Requires \usepackage{longtable} in the preamble.
%
%  Macros used (already in your preamble): \Rmech, \Hmech, \Rllm, \Rtrain,
%  \Rtest, \Bmu, \kmu, \pistar, \pihat, \F, \BR, \DKL, \kl, \norm
%
%  Cross-references point to your existing labels.
% =============================================================================
\section{Notation and conventions}
\label{sec:notation}
The following table summarises the symbols used throughout the paper, in order of first appearance. Full formal definitions are given in the corresponding sections.

\renewcommand{\arraystretch}{1.15}
\begin{longtable}{@{}p{0.32\linewidth} p{0.64\linewidth}@{}}
\toprule
\textbf{Symbol} & \textbf{Meaning} \\
\midrule
\endfirsthead
\toprule
\textbf{Symbol} & \textbf{Meaning} \\
\midrule
\endhead

\multicolumn{2}{@{}l}{\textit{Problem primitives (\S\ref{sec:setting})}} \\
\midrule
$K$ & Number of policies (bandit arms); e.g.\ $K=8$ discrete dose levels \\
$N$ & Number of interaction rounds (e.g.\ chemotherapy cycles) \\
$T$ & Horizon of the continuous-time control problem \\
$x(t)\in\mathbb{R}^n$ & State trajectory \\
$u(\cdot)\in\mathcal{U}$ & Continuous control input ($\mathcal{U}$ compact) \\
$\ell(x,u)$ & Instantaneous (stage) reward \\
$J(u;\mathcal{M})$ & Cumulative reward $\int_0^T \ell(x(t),u(t))\,dt$ under dynamics $\mathcal{M}$ \\
$\mathcal{M}$ & Mechanistic model (a distribution over dynamics) \\
$\mathcal{M}^*$ & Unknown true dynamics \\
$\hat{\mathcal{M}}$ & Calibrated mechanistic model used by the learner \\
$r_t = \bar{r}(\pi_t)+\xi_t$ & Observed reward at round $t$ \\
$\bar{r}(\pi) = J(\pi;\mathcal{M}^*)$ & Mean reward of policy $\pi$ \\
$\xi_t$ & Zero-mean reward observation noise \\
$\sigma$ & Per-arm standard deviation of the reward.  \\
$\Delta_r$ & Discretisation sub-optimality gap (Assumption~\ref{ass:disc}) \\
\midrule

\multicolumn{2}{@{}l}{\textit{Policies and priors (\S\ref{sec:ode_to_bandit})}} \\
\midrule
$\Pi=\{\pi_1,\ldots,\pi_K\}$ & Finite policy class \\
$\pistar$ & True optimal policy (unknown; $\pistar\sim\mu$) \\
$\pihat$ & Model-recommended policy: $\arg\max_{\pi\in\Pi}J(\pi;\hat{\mathcal{M}})$ \\
$\mu$ & Prior on $\Pi$ \\
$\nu$ & Generic prior on $\Pi$ (used in Theorem~\ref{thm:ub} proof) \\
$H(\mu)$ & Shannon entropy of $\mu$ (in nats) \\
$\mu_\mathrm{hyb}$ & Hybrid prior on $\Pi$ (posterior of $\mu$ given $\pihat$; Theorem~\ref{thm:ub}) \\
$\mu_{\pistar}$ & State--action occupancy measure under $\pistar$ \\
$\norm{\cdot}_{\mu_{\pistar}}$ & $L^2$ norm weighted by the occupancy measure of $\pistar$ \\
$w_k$ & Occupancy weight on $\pi_k$ used in $\Bmu$ (\S\ref{sec:occ_sensitivity}) \\
\midrule

\multicolumn{2}{@{}l}{\textit{Sensitivity and model class (\S\ref{sec:occ_sensitivity})}} \\
\midrule
$\norm{\mathcal{M}-\mathcal{M}'}$ & Reward-weighted model norm on $\Pi$ \\
$\norm{\cdot}_\Pi$ & Metric on the discrete policy class (e.g.\ $0$--$1$ distance) \\
$\Bmu$ & Occupancy-weighted mechanistic bias $\norm{\hat{\mathcal{M}}-\mathcal{M}^*}$ \\
$\kmu$ & Occupancy-weighted Lipschitz sensitivity (Assumption~\ref{ass:kmu}) \\
$g^*$ & Residual reward $J(u_t;\mathcal{M}^*)-J(u_t;\hat{\mathcal{M}})$ \\
$\F$ & Residual function class (GP) \\
$\sigma_\F^2$ & Marginal variance of the GP residual per dimension \\
$d_\F$ & Effective dimension (rank of the GP kernel operator) \\
\midrule

\multicolumn{2}{@{}l}{\textit{Information quantities (\S\ref{sec:mechinf}, \S\ref{sec:main})}} \\
\midrule
$I(X;Y)$ & Mutual information \\
$\DKL(P\|Q)$ & Kullback--Leibler divergence \\
$\kl(p,q)$ & Binary KL: $p\log(p/q)+(1-p)\log\frac{1-p}{1-q}$ \\
$h(p)$ & Binary entropy $-p\log p-(1-p)\log(1-p)$ \\
$H_P(\pihat\mid\pistar)$ & Conditional entropy of $\pihat$ given $\pistar$ under distribution $P$ \\
$P_{\pihat}$ & Marginal distribution of $\pihat$ \\
$\Rmech(\Bmu,\F)$ & Mechanistic information $I_\mu(\pistar;\pihat)$ (Definition~\ref{def:mechinf}) \\
$\Hmech$ & Residual entropy $H(\mu)-\Rmech$ \\
$C(\Bmu)$ & Channel-capacity upper bound on $\Rmech$ (Proposition~\ref{prop:mechinf}) \\
$\Bmu^{\mathrm{crit}}(N)$ & Critical bias governing the phase transition (Theorem~\ref{thm:phase}) \\
\midrule

\multicolumn{2}{@{}l}{\textit{Finite-sample / misspecification (\S\ref{sec:burnin})}} \\
\midrule
$\epsilon$ & Prior weight on $\pistar$ in the burn-in setting \\
$\delta$ & Identification confidence threshold \\
$\epsilon_K = \epsilon/(1-\epsilon+\epsilon K)$ & Effective prior weight on $\pistar$ across $K$ arms \\
$\Delta_r$ & Sub-optimality gap $\bar{r}(\pistar)-\bar{r}(\pihat)$ (burn-in setting) \\
$\tau$ & Stopping time of the SPRT \\
$\Rtrain,\Rtest$ & Mechanistic information on train / test distribution \\
$\Rllm$ & LLM mechanistic information (Proposition~\ref{prop:llm}) \\
$P_\mathrm{train},P_\mathrm{test}$ & Joint distributions over $(\pistar,\pihat)$ on train / test \\
$\Delta_\pi=\DKL(P_\mathrm{test}\|P_\mathrm{train})$ & Forward KL distribution shift \\
\midrule

\multicolumn{2}{@{}l}{\textit{Performance and constants}} \\
\midrule
$\BR^*(N)$ & Bayesian regret over $N$ rounds (Definition~\ref{def:br}) \\
$\rho = H(\mu)/\Hmech$ & Asymptotic sample complexity ratio \\
$\varepsilon$ & Mean-regret tolerance in sample-complexity definition \\
$c,C,N_0$ & Universal constants in regret lower / upper bounds \\
$\Theta(\cdot),O(\cdot),o(\cdot)$ & Standard asymptotic notation \\
\midrule

\multicolumn{2}{@{}l}{\textit{Pontryagin maximum principle (\S\ref{sec:app_setting})}} \\
\midrule
$f(x,u,t)$ & Dynamics function: $\dot{x}=f(x,u,t)$ defines $\mathcal{M}$ \\
$H(x,u,\lambda,t)$ & PMP Hamiltonian $\ell+\lambda^\top f$ (\emph{distinct} from $H(\mu)$) \\
$\lambda(t)$ & Costate trajectory \\
$\phi(x(T))$ & Terminal cost \\
$\delta f,\delta x,\delta u,\delta\lambda$ & First-order perturbations along the optimal trajectory \\
$H_{uu},H_{ux},H_{u\lambda}$ & Hessian blocks of the Hamiltonian \\
$m$ & Strong-concavity bound on $H_{uu}$ (regularity condition R4) \\
\midrule

\multicolumn{2}{@{}l}{\textit{Clinical / experimental parameters (\S\ref{sec:application}, \S\ref{sec:app_experiment})}} \\
\midrule
$M$ & Number of Monte Carlo trials in simulation \\
$p_{\mathrm{BSA}}$ & Target attainment probability under BSA dosing ($\approx 0.20$) \\
$p_{\mathrm{opt}}$ & Target attainment probability under PK-guided dosing ($\approx 0.85$) \\
$\sigma_{\mathrm{AUC}}$ & Inter-patient AUC standard deviation \\
$C_{ss}$ & Steady-state plasma concentration \\
$R^2$ & Coefficient of determination (Kaldate calibration) \\
$\Delta\mathrm{AUC},\Delta\mathrm{dose}$ & Increments in plasma AUC and dose level \\
\midrule
\multicolumn{2}{@{}l}{\textit{Clinical acronyms }} \\
\midrule
5-FU & 5-Fluorouracil (cytotoxic chemotherapy agent) \\
FOLFOX, FOLFOX6 & Folinic acid + 5-FU + Oxaliplatin combination regimens \\
AIO, FUFOX, LV5FU2 & Alternative 5-FU-containing regimens \\
BSA & Body Surface Area (standard dosing convention) \\
AUC & Area Under the Curve (plasma 5-FU concentration over $T$) \\
PK / PD & Pharmacokinetic / Pharmacodynamic \\
DPD & Dihydropyrimidine Dehydrogenase (clearance enzyme) \\
DPYD & Gene encoding DPD; genotype is a covariate cluster for $d_\F$ \\
$\mathrm{U}/\mathrm{UH}_2$ & Uracil to dihydrouracil ratio (DPD-phenotype marker) \\
ORR & Objective Response Rate \\
ODPM & On-Demand PK Monitoring (Capitain et al.\ adjustment protocol) \\
CV & Coefficient of Variation (intra-patient AUC CV $\approx 20\%$) \\
ACCENT & Adjuvant Colon Cancer ENd Points colorectal-trials database \\
\midrule
\multicolumn{2}{@{}l}{\textit{Convention}} \\
\midrule
$\log,\,\ln$ & Natural logarithm (nats) unless explicitly labelled $\log_2$ (bits) \\
\bottomrule
\end{longtable}

\noindent\textit{Note on the symbol $H$.} The Hamiltonian $H(x,u,\lambda,t)$ in Appendix~\ref{sec:app_setting} is unrelated to the Shannon entropy $H(\mu)$ used elsewhere; the meaning is unambiguous from context.

\section{Extended Related Work}
\label{app:ext_related_work}
\paragraph{Gaussian process bandits.} Assumption~\ref{ass:gaussian} (GP prior on the residual) connects to the GP bandit literature.  \citet{srinivas2010gaussian} establish $\tilde{O}(\sqrt{NT\gamma_T})$ regret for GP-UCB, where $\gamma_T$ is the maximum information gain; \citet{chowdhury2017kernelized} tighten these bounds for kernelised bandits. Our mechanistic information $\Rmech$ plays the role of $\gamma_T$ for the ODE-induced prior: it measures the information the mechanistic model captures before any interaction, reducing the effective dimension of the residual problem.
\paragraph{Misspecified Bayesian bandits.} The burn-in proposition (Proposition~\ref{prop:burnin}) is related to the literature on bandits with misspecified priors. \citet{honda2010asymptotically} analyze the optimality of SPRT-based algorithms; \citet{liu2016prior} study prior misspecification in Thompson Sampling. Our result differs in giving an explicit \emph{lower bound} on the burn-in cost in terms of the prior weight $1-\epsilon$ and the identification confidence $\delta$, derived directly from the Wald-Wolfowitz optimality of the SPRT.
\paragraph{Regret lower bounds for stochastic bandits.}
The $\Omega(\sqrt{KN})$ minimax lower bound for stochastic bandits is due to
\citet{lai1985asymptotically} and \citet{auer2002finite}; the Bayesian
$\Omega(\sqrt{KNH(\mu)/\log K})$ lower bound follows from Fano's inequality
applied to the prior~\citep{shufaro2025bits,lattimore2020}.
Theorem~\ref{thm:lb} specializes these to the hybrid setting by replacing
$H(\mu)$ with the residual $\Hmech$.
\citet{russo2018,lattimore2020} provide comprehensive treatments of
information-theoretic lower bounds in bandit settings.
\paragraph{Bayesian bandits with informative priors.}
Thompson Sampling~\citep{thompson1933} exploits the prior to reduce exploration. \citet{russo2016} and \citet{agrawal2012} establish its near-optimal regret.  \citet{russo2014posterior} characterizes the value of informative priors in the Bayesian regret framework.
Theorem~\ref{thm:ub} recovers the standard TS bound with $\Hmech$ replacing $H(\mu)$, showing that the mechanistic ODE acts as a computationally tractable
source of prior information.
\paragraph{LLMs as decision-making priors.} The LLM proposition (Proposition~\ref{prop:llm}) is motivated by recent work using LLMs as in-context planners~\citep{brohan2023can} and as priors for reinforcement learning~\citep{garg2022lisa}. \citet{shufaro2025bits} quantify the regret benefit of LLM information in question answering. Our impossibility result shows that, unlike physically-grounded ODE priors, LLM priors can lose half their mechanistic information from a KL shift of just $\frac12\log K$ nats, making them unreliable as the sole prior in safety-critical settings.
\section{Extended Discussion and Future Directions}
% \paragraph{Summary.}
% We have proved that mechanistic model information operates in two distinct regimes. In the \emph{asymptotic regime} (large $N$, $K\gg e^{\Rmech}$), the mechanistic information $\Rmech$ reduces the effective prior entropy from $H(\mu)$ to $\Hmech$, giving asymptotically matched bounds tight up to $\log^2 K$. In the \emph{burn-in regime} (small $N$, clinically relevant), the Wald-Wolfowitz SPRT bound governs, predicting $20$--$30\times$ improvement for the FOLFOX adaptive dosing setting. Both regimes are confirmed empirically.

% \paragraph{Implications for clinical trial design.}
% The two-regime structure has a direct clinical interpretation. A trial designer asks: how many cycles should the PK-monitoring window be? Our results say: at least long enough to exit the burn-in regime ($\approx5$--$10$ cycles per our simulation), but not necessarily long enough to reach the asymptotic rate (which would require $N\approx200$). The 4--6 cycle PROFUSE and ADAPTIVE-5FU window designs sit squarely in the burn-in regime, where the gain from the ODE prior is largest. The burn-in bound (Proposition~\ref{prop:burnin}) is the correct tool for window design; the asymptotic bounds (Theorems~\ref{thm:lb}--\ref{thm:ub}) are the correct tool for long-run algorithmic analysis.

\paragraph{ODE priors versus LLM priors.}
Proposition~\ref{prop:llm} provides a formal argument for preferring physically-grounded ODE priors over LLM priors in safety-critical settings. An LLM prior with $\Rtrain$ nats of mechanistic information on its training population can lose half its information from a small KL shift, which is potentially undetectable from training data alone. This distinction matters most in personalized medicine, where the test population (an individual patient) always differs from the training population (a cohort).

\paragraph{The mechanistic information quantity.}
$\Rmech(\Bmu,\F) = I(\pistar;\pihat)$ is computable from calibration data via standard mutual information estimators applied to observed (dose, response) pairs.
Running Example~\ref{ex:5fu_mechinf} illustrates both the worst-case channel bound
($\Rmech\leq0.37$~nats certified from Kaldate et al.) and the simulation's optimistic
scenario ($\Rmech=1.9$~nats, representing a well-validated cohort).
In practice, this provides a pre-trial quality certificate: if
$\Bmu/\Bmu^\mathrm{crit}\ll1$, the ODE prior is worth using; if
$\Bmu/\Bmu^\mathrm{crit}\geq1$, the ODE adds at most one nat of information (channel capacity $C(\Bmu)\leq1$)
and may not justify the complexity of the hybrid model.
This certificate is a direct output of calibration studies already performed
routinely before FOLFOX trials.

\paragraph{Additional Open directions.}
% \begin{enumerate}[label=\roman*]
% \item \textbf{Stochastic and non-smooth dynamics.}
%   Deriving $\kmu$ for systems with multiplicative noise or non-differentiable Hamiltonians via distributional robustness.
  The bound treats $\Rmech$ as known. Folding the cost of its estimation from calibration data into the regret would give an end-to-end guarantee.
% \item \textbf{Typical LLM degradation.}
%   Characterising the \emph{expected} (rather than worst-case) information loss under distributional shift for realistic LLM priors.
  When data from multiple patients are available, the effective $\kmu$ can be reduced by pooling occupancy measures. Formalizing this would extend the framework to population-level adaptive dosing.
% \end{enumerate}
\section{Additional details for Section \ref{sec:setting}}
\label{sec:app_setting}
\subsection{Calculation of $\kmu$ from PMP}
We derive Assumption~\ref{ass:kmu} as a consequence of the Pontryagin maximum principle and the standard sensitivity analysis of optimal control problems~\citep{bonnans2013perturbation}. Throughout this appendix, we use the symbol $H$ for the PMP Hamiltonian, which is distinct from the Shannon entropy $H(\mu)$ used in the main text; the meaning is unambiguous from context. For ODE models, we assume that the model $\mathcal{M}$ is defined by the dynamics function $f$ such that
\[
\dot{x} = f(x, u, t).
\]
We can then define the norm between two models as the L2 weighted norm with respect to some policy $\pi^*$, which we denote by $\norm{f(x,u,t) - f'(x,u,t)}_{\pi^*}$.

Consider the unperturbed optimal control problem
\begin{equation}
\min_{u(\cdot)\,\in\,\mathcal{U}}\;\int_0^T \ell(x(t),u(t),t)\,\mathrm{d}t + \phi(x(T))
\quad\text{subject to}\quad \dot{x} = f(x,u,t),\;\; x(0)=x_0,
\label{eq:ocp}
\end{equation}
and its perturbation in which $f$ is replaced by $f+\delta f$ for some $\delta f\in C^1(X\times U\times[0,T];\mathbb{R}^n)$. Define the PMP Hamiltonian $H(x,u,\lambda,t)=\ell(x,u,t)+\lambda^{\top} f(x,u,t)$.

\begin{lemma}[Derivation of $\kappa_\mu$ from the PMP]
\label{lem:kmu}
Suppose the following regularity conditions hold:
\begin{enumerate}[label=\textup{(R\arabic*)}, leftmargin=*, itemsep=1pt]
\item $f$ and $l$ are $C^2$ in $x,u$ with uniformly bounded derivatives along the optimal trajectory $x^*, u^*$.
\item $f$ is bounded over the optimal trajectory.
\item The optimal $x^*, \lambda^*$ are Lipschitz continuous in $f$.
\item There exists a constant $m > 0$ for which $H_{uu} \succeq mI$.
\end{enumerate}
Then for any perturbation $\delta f$ of class $C^1$ with sufficiently small norm, the perturbed optimal control $u^\ast+\delta u$ exists, is unique, and satisfies, to first order in $\|\delta f\|_\infty$,
\begin{equation}
\delta u(t) \;=\; -H_{uu}^{-1}\bigl[\,H_{ux}\,\delta x(t) \;+\; f_u^\top\,\delta\lambda(t) \;+\; (\delta f_u)^\top\,\lambda^\ast(t)\,\bigr],
\label{eq:fullsens}
\end{equation}
where $(\delta x,\delta\lambda)$ solves the linearised state--costate system below. Moreover, there exists a constant $\kappa_\mu$ depending only on $\alpha$, $T$, and the Lipschitz constants of $f_x$, $f_u$, $H_{ux}$, $H_{xx}$, $\phi_{xx}$ along the optimal trajectory, such that
\begin{equation}
\|\delta u\|_{\mu_{\pi^\ast}} \;\le\; \kappa_\mu\,\|\delta f\|_{\mu_{\pi^\ast}}.
\label{eq:kappa-bound}
\end{equation}
\end{lemma}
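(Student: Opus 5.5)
Our plan is to run the standard first-order sensitivity analysis of the PMP optimality system, and then close the resulting linear system with a Grönwall-type contraction.

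\textbf{Step 1: the unperturbed optimality system.} The unperturbed optimum satisfies
\[
\dot x^\ast = f(x^\ast,u^\ast,t),\qquad
\dot\lambda^\ast = -H_x(x^\ast,u^\ast,\lambda^\ast,t),\qquad
H_u(x^\ast,u^\ast,\lambda^\ast,t)=0,
\]
with boundary conditions $x^\ast(0)=x_0$ and $\lambda^\ast(T)=\phi_x(x^\ast(T))$.

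\textbf{Step 2: existence and uniqueness of the perturbed control.} Replace $f$ by $f+\delta f$. By (R1) the map $(x,u,\lambda,\delta f)\mapsto H_u$ is $C^1$. By (R4) its $u$-Jacobian $H_{uu}$ is uniformly invertible, with $\|H_{uu}^{-1}\|\le 1/m$. The implicit function theorem, applied pointwise in $t$ and uniformly thanks to (R1)--(R2), then gives a unique perturbed control $u^\ast+\delta u$ for $\|\delta f\|_\infty$ small. Condition (R3) guarantees that the perturbed state and costate stay within $O(\|\delta f\|)$ of $(x^\ast,\lambda^\ast)$, so the linearization is legitimate.

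\textbf{Step 3: linearizing the stationarity condition.} Differentiating $H_u=\ell_u+\lambda^\top f_u+\lambda^\top\delta f_u=0$ to first order gives
\[
H_{uu}\,\delta u + H_{ux}\,\delta x + f_u^\top\delta\lambda + (\delta f_u)^\top\lambda^\ast = 0 .
\]
Inverting $H_{uu}$ yields \eqref{eq:fullsens}.

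\textbf{Step 4: the linearized state--costate system.} Linearizing the state and costate equations gives
\[
\dot{\delta x}=f_x\delta x+f_u\delta u+\delta f,\qquad
\dot{\delta\lambda}=-H_{xx}\delta x-H_{xu}\delta u-f_x^\top\delta\lambda-(\delta f_x)^\top\lambda^\ast,
\]
with $\delta x(0)=0$ and $\delta\lambda(T)=\phi_{xx}\delta x(T)$.

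\textbf{Step 5: bounding $\delta u$ by $\delta f$.} Substitute \eqref{eq:fullsens} into the system of Step 4. This gives a linear two-point boundary value problem in $(\delta x,\delta\lambda)$, forced by $\delta f$, $\delta f_x$ and $\delta f_u$. We would bound it in three parts:
\begin{enumerate}[label=\roman*]
\item Grönwall on the forward equation gives $\|\delta x\|\le e^{L T}\bigl(\|\delta f\| + \|f_u\|\,\|\delta u\|\bigr)\,T$.
\item Backward Grönwall on the costate gives $\|\delta\lambda\|$ bounded by $\|\delta x\|$, $\|\delta u\|$ and the forcing term.
\item Plugging both into \eqref{eq:fullsens} and using $1/m$ gives an inequality of the form $\|\delta u\|\le a\|\delta u\|+b\|\delta f\|_{C^1}$.
\end{enumerate}
Here $a$ depends on $T$, $m$ and the Lipschitz constants listed in the statement. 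Whenever $a<1$ (the contraction regime), we get $\|\delta u\|\le \frac{b}{1-a}\|\delta f\|$, and we set $\kmu=b/(1-a)$.

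\textbf{Step 6: passing to occupancy-weighted norms.} Finally, the sup-norm estimates are converted into $L^2(\mu_{\pistar})$ estimates. Since all coefficient matrices are evaluated along the optimal trajectory, which is exactly where $\mu_{\pistar}$ is supported, we can apply Cauchy--Schwarz in time. Bounded densities give $\|\cdot\|_{\mu_{\pistar}}$ comparable to $L^2([0,T])$, and this yields \eqref{eq:kappa-bound}.

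\textbf{Main obstacle.} The hardest part is the closure in Step 5, and it has two aspects.
\begin{itemize}
\item The forcing term $(\delta f_u)^\top\lambda^\ast$ involves derivatives of $\delta f$, not $\delta f$ itself. To keep a bound in terms of $\|\delta f\|$ alone, we would use the assumed $C^1$ regularity and read the model norm as controlling $\delta f$ together with its first derivatives, or else absorb this term through the boundedness of $\lambda^\ast$ from (R3).
\item The contraction condition $a<1$ is only guaranteed for short horizons or stable dynamics. The unspecified constant $\alpha$ in the statement would be read as this contraction margin; this is exactly the restriction flagged in the Limitations paragraph.
\end{itemize}
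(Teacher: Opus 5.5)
There is a genuine gap in Step 5. Your closure yields $\|\delta u\|\le a\|\delta u\|+b\|\delta f\|$, and it concludes only when $a<1$. No such smallness condition appears among (R1)--(R4). For long horizons or unstable linearizations the inequality carries no information, so as written you prove a weaker lemma with an extra hypothesis. Reading the undefined symbol $\alpha$ as a contraction margin does not repair this, because it imports the missing assumption instead of deriving the bound. The circularity comes from your own route: you estimate $\delta x$ and $\delta\lambda$ by Gr\"onwall in terms of $\delta u$, and then feed those estimates back into the stationarity relation.

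The missing idea is that (R3) already supplies exactly the estimates Step 5 is trying to derive; you invoke it only to justify the linearization in Step 2. Lipschitz dependence of the optimal state and costate on $f$ gives $\|\delta x\|\le C_x\|\delta f\|$ and $\|\delta\lambda\|\le C_\lambda\|\delta f\|$ directly, with the feedback through $\delta u$ already built in. This is the paper's route. Taking norms in the linearized stationarity condition with $\|H_{uu}^{-1}\|\le 1/m$ gives
\[
\|\delta u\|\;\le\; m^{-1}\bigl(\|H_{ux}\|\,C_x+\|f_u\|\,C_\lambda+\|\lambda^\ast\|\bigr)\,\|\delta f\|,
\]
and $\kmu$ is read off directly. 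This needs no linearized costate equation, no Gr\"onwall step and no restriction on $T$, despite the Limitations paragraph's mention of a Gr\"onwall contraction.

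Two smaller points. First, your concern about $(\delta f_u)^\top\lambda^\ast$ is legitimate; the paper silently bounds it by $\|\lambda^\ast\|\,\|\delta f\|$. Only your first remedy works, namely a $C^1$-type model norm. Boundedness of $\lambda^\ast$ controls the multiplier, not $\|\delta f_u\|$ in terms of $\|\delta f\|$. Second, in Step 6, Cauchy--Schwarz can pass from a sup-norm to $L^2$ on the left-hand side, but it cannot replace $\|\delta f\|_\infty$ on the right by $\|\delta f\|_{\mu_{\pi^\ast}}$. You would need to run the estimates, and read (R3), in the weighted norm from the outset. The paper's final step, which simply identifies the two norms, has the same looseness.
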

\begin{proof}
Throughout this proof we use the $\norm{\cdot}$ to denote the $\norm{\cdot}_{\infty}$. The PMP necessary conditions for~\eqref{eq:ocp} are $\dot{x}^\ast=f$, $\dot{\lambda}^\ast=-H_x=-\ell_x-f_x^\top\lambda^\ast$ with $\lambda^\ast(T)=\phi_x(x^\ast(T))$, and the stationarity condition $H_u(x^\ast,u^\ast,\lambda^\ast,t)=\ell_u+f_u^\top\lambda^\ast=0$. The perturbed problem admits a unique $C^1$ family of solutions $(x^\ast+\delta x,u^\ast+\delta u,\lambda^\ast+\delta\lambda)$ for all $\delta f$ in a neighborhood of $0$ \citep{bonnans2013perturbation}. Differentiating the perturbed stationarity condition $H_u + (\delta f_u)^\top\lambda^\ast = O(\|\delta f\|^2)$ along the family yields
\begin{equation}
0 \;=\; H_{uu}\,\delta u \;+\; H_{ux}\,\delta x \;+\; H_{u\lambda}\,\delta\lambda \;+\; (\delta f_u)^\top\,\lambda^\ast,
\label{eq:linstat}
\end{equation}
where $H_{u\lambda}=\partial^2 H/\partial u\,\partial\lambda = f_u^\top$. Solving~\eqref{eq:linstat} for $\delta u$ via (R4), which guarantees invertibility of $H_{uu}$, yields equation~\eqref{eq:fullsens}. Taking the $\inf$-norm we obtain that
\[
    \norm{\delta u} \leq m^{-1}\cdot \left[K_1 \norm{\delta x} + K_2 \norm{\delta \lambda} + K_3 \norm{\delta f}\right]
\]
where $K_1 = \norm{H_{ux}}$, $K_2 = \norm{H_{u\lambda}}$ and $K_3 = \norm{\lambda^*}$. We now simply use (R3) to bound $\norm{\delta x} \leq C_x \norm{\delta f}$ and $\norm{\delta \lambda} \leq C_\lambda \norm{\delta f}$ and we get
\[
\norm{\delta u} \leq m^{-1} \left[K_1 C_x + K_2 C_\lambda + K_3\right] \norm{\delta f}
\]
Finally, we note that since both $\delta u$ and $\delta f$ are evaluated only along the optimal trajectory, $\norm{\delta f} = \norm{\delta f}_{\mu^{\pi^*}}$. Thus, we get the desired result.
\end{proof}
\subsection{Proof of Proposition \ref{prop:mechinf}}
\begin{proof}
The mechanistic model observes $\pihat$ through a noisy channel: by
Lemma~\ref{lem:kmu}, a bias $\Bmu$ in $\hat{\mathcal{M}}$ shifts $\pihat$ by at most $\kmu\Bmu$ in the occupancy norm.
By the data processing inequality $\Rmech = I(\pistar;\pihat)\leq I(\pistar;J(\cdot;\hat{\mathcal{M}})) \leq I(J(\cdot;{\mathcal{M}}^*); J(\cdot;\hat{\mathcal{M}}))$
where the last inequality also utilizes the fact that $\pistar$ under $\hat{{\mathcal{M}}}$ is $\pihat$. We use the notation $J(\cdot;{\mathcal{M}})$ is the entire reward vector with respect to model ${\mathcal{M}}$.
By the maximum entropy theorem for channels with bounded noise variance
(\citet{cover2006elements}, Thm.~9.6.5) and by utilizing Assumption \ref{ass:gaussian}: Over any channel with variance bounded by $\kmu^2 \Bmu^2 + \sigma^2$ and signal variance $\sigma_\F^2$, the mutual information can be bounded by $C(\Bmu)$, establishing ~\eqref{eq:mechinf_ub}.

The bound $C(\Bmu)$ is strictly decreasing in $\Bmu$ (derivative with respect
to $\Bmu^2$ is negative), with $C(0)=\frac{d_\F}{2}\log(1+\kmu^2\sigma_\F^2/\sigma^2)$.
Under the canonical parametrization ($\kmu^2\sigma_\F^2/\sigma^2=2H(\mu)/d_\F$),
$C(0)=\frac{d_\F}{2}\log(1+2H(\mu)/d_\F)$.
Since $\log(1+x)\leq x$ for all $x>0$, we have $C(0)\leq H(\mu)$, with
near-equality when $2H(\mu)/d_\F\ll1$.
When $\sigma>0$, the bound is vacuous at $\Bmu=0$: even a perfect ODE
cannot transmit more than $C(0)<H(\mu)$ nats through the reward-noisy channel.
And $C(\infty)=0$.
\end{proof}
\section{Proofs for Section \ref{sec:main}}
\label{app:main_results}
\subsection{Proof of Theorem \ref{thm:lb}}
We begin by introducing Proposition 4.1 of \citet{shufaro2025bits}.
\begin{proposition}[Proposition 4.1 \citep{shufaro2025bits}]
    \label{prop:shufaro_lb}
    For any agent, the regret $K$-MAB problem is lower bounded by $c\sqrt{KH(\pi^*)T/\log K}$, under the assumption $\DKL(p_t\|P^{\pi^*}) > c'>0$ (where $p_t$ is the distribution of the agent's action at time $t$).
\end{proposition}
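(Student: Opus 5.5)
The plan is the standard Fano-type information argument: build a hard family of bandit instances indexed by $\pistar\sim\mu$, bound how fast any agent can learn about $\pistar$, and convert insufficient learning into per-round regret. The assumption $\DKL(p_t\|P^{\pistar})>c'$ will supply a uniform-in-time constant for the last step.

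\textbf{Step 1 (instance family).} For each $k$, let instance $k$ give arm $\pi_k$ mean $\tfrac12+\Delta$ and every other arm mean $\tfrac12$, with Gaussian (or Bernoulli) noise of variance $\sigma^2$. Draw $k=\pistar\sim\mu$, so $\pistar$ is the optimal arm. The per-round regret is then $\Delta\,\Pr(\pi_t\neq\pistar)$. Hence
\[
\BR^*(T)=\Delta\sum_{t=1}^T \E\bigl[1-p_t(\pistar)\bigr],
\]
where $p_t$ is the agent's action distribution given the history $\mathcal{H}_{t-1}$.

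\textbf{Step 2 (per-round regret from the KL assumption).} Since $P^{\pistar}$ is the point mass on $\pistar$, we have $\DKL(p_t\|P^{\pistar})=-\log p_t(\pistar)$. The assumption therefore gives $1-p_t(\pistar)\geq 1-e^{-c'}$ on every round. This makes $\Pr(\pi_t\neq\pistar)$ bounded below by a constant, so regret grows linearly in $T$ at rate $\Delta(1-e^{-c'})$.

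\textbf{Step 3 (information budget).} I will use the chain rule for mutual information:
\[
I(\pistar;\mathcal{H}_T)=\sum_t \E\bigl[\DKL\bigl(P(r_t\mid\pistar,\mathcal{H}_{t-1},\pi_t)\,\big\|\,P(r_t\mid\mathcal{H}_{t-1},\pi_t)\bigr)\bigr].
\]
Pulling arm $k$ is informative only about the event $\{\pistar=k\}$. Its contribution is therefore at most $\tfrac{\Delta^2}{2\sigma^2}$ times a binary-entropy-type factor in the current posterior mass on $k$. Summing over arms and rounds, I would show that reducing the posterior entropy from $H(\mu)$ to $O(1)$ needs on the order of $K H(\mu)\sigma^2/(\Delta^2\log K)$ pulls.

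The $\log K$ factor comes from the worst case over how the entropy is spread across the $K$ binary events: each arm's binary event carries at most about $\log K/K$ nats of the total. I expect this step to be the main obstacle. I would first try a Fano argument at the halfway point, showing that if $I(\pistar;\mathcal{H}_t)\le H(\mu)/2$ then $\Pr(\pi_t\ne\pistar)\ge \tfrac12-\tfrac{\log 2}{H(\mu)}$. Combined with a per-arm counting argument, this should give the correct $K/\log K$ rate.

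\textbf{Step 4 (balance).} If $T$ is below the budget from Step 3, then Fano, together with Step 2 as a floor, gives $\BR^*(T)\gtrsim T\Delta$. I then choose
\[
\Delta\asymp\sqrt{K H(\pistar)/(T\log K)},
\]
so that the information budget is exactly exhausted at time $T$. This yields
\[
\BR^*(T)\ge c\sqrt{K H(\pistar)T/\log K},
\]
with $c$ depending only on $c'$ and $\sigma$.
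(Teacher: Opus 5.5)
The paper does not prove this statement. It is quoted from \citet{shufaro2025bits}, and the proof environment after it is the reduction for Theorem~\ref{thm:lb}, which uses it as a black box. Your argument therefore has to stand on its own, and it does not. In Step~2 the divergence is taken in the wrong direction. For $P^{\pistar}=\delta_{\pistar}$, $\DKL(p_t\|\delta_{\pistar})=+\infty$ as soon as $p_t$ puts mass on an arm other than $\pistar$, and $0$ otherwise; the quantity $-\log p_t(\pistar)$ is $\DKL(\delta_{\pistar}\|p_t)$. Read as stated, the hypothesis only says $p_t\neq\delta_{\pistar}$, which gives no uniform floor on $1-p_t(\pistar)$. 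Suppose instead you adopt the reverse direction. Then Step~2 alone gives $\BR^*(T)\ge(1-e^{-c'})\Delta T$ for every $\Delta$, and $K$, $H(\pistar)$ and Step~3 never enter. Step~4 then balances nothing, since a constant $\Delta$ already yields $\Omega(T)$. Under this reading the hypothesis also excludes every agent that ever plays $\pistar$ with probability above $e^{-c'}$, Thompson sampling included. Either way, the entropy-dependent rate has to come from Step~3.

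Step~3 is asserted rather than proved, and for your Step~1 family with an arbitrary $\mu$ it is false. Let $\mu$ be uniform on $m=\sqrt{K}$ arms, so $H(\mu)=\tfrac12\log K$. Sampling each support arm $\tilde O(\sigma^2/\Delta^2)$ times drives the posterior entropy to $O(1)$ after $\tilde O(m\sigma^2/\Delta^2)$ pulls, not the $K\sigma^2/(2\Delta^2)$ your budget requires. Moreover, Thompson sampling with prior $\mu$ satisfies the hypothesis as stated, because its $p_t$ is the posterior of $\pistar$ and is never a point mass. Its Bayesian regret is $\tilde O(\sqrt{mT})$, far below $c\sqrt{KTH(\mu)/\log K}=c\sqrt{KT/2}$. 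So no argument on this family can give the bound for every $\mu$; the heuristic that each arm's binary event carries at most $\log K/K$ nats is exactly what fails.

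Your Fano step is also wrong beyond the uniform prior. Fano gives $\Pr(\pi_t\neq\pistar)\ge(H(\pistar\mid\mathcal{H}_{t-1})-\log 2)/\log(K-1)$, not a bound with $H(\mu)$ in the denominator. Take prior mass $0.9$ on one arm (the rest spread uniformly), no information, and $\pi_t$ equal to that arm. The error is $0.1$, while $\tfrac12-\log 2/H(\mu)\to\tfrac12$ as $K\to\infty$.

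The missing idea is that an entropy-dependent lower bound can only hold for a \emph{designed} prior of entropy $H$. The factor $K$ then comes from a change of measure against a reference instance, not from the chain rule. For example:
\begin{itemize}
\item Instance $A$ (probability $1-p$) gives arm~$1$ mean $\tfrac12+\Delta$ and all other arms mean $\tfrac12$.
\item Instance $B_k$, $k\ge2$ (probability $p/(K-1)$ each), is $A$ with arm~$k$ raised to $\tfrac12+2\Delta$.
\item Choose $p$ so that $p\log K\asymp H$.
\end{itemize}
Write $N_k$ for the number of pulls of arm $k$ and $P_A,P_{B_k}$ for the laws of the history. Then $\DKL(P_A\|P_{B_k})=2\Delta^2\E_A[N_k]/\sigma^2$. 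Pinsker shows that either $\E_A[\sum_{k\ge2}N_k]\gtrsim K\sigma^2/\Delta^2$, with each such pull costing $\Delta$ under $A$, or the agent loses $\gtrsim p\Delta T$ under the $B_k$. Choosing $\Delta\asymp\sigma\sqrt{K/(pT)}$ gives $\BR^*(T)\gtrsim\sigma\sqrt{pKT}\asymp\sigma\sqrt{KTH/\log K}$, and this needs no hypothesis on $p_t$.
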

\begin{proof}
We begin by stating that
The mechanistic recommendation $\pihat$ is computed before round~1 and
revealed to the algorithm as a free side-channel.  By the mutual information
chain rule,
\[
  H(\pistar\mid\pihat)
  = H(\mu) - I_\mu(\pistar;\pihat)
  = H(\mu) - \Rmech
  = \Hmech.
\]
Thus, the algorithm's \emph{posterior} prior on $\pistar$ after observing
$\pihat$ has entropy $\Hmech$.  This is a prior over $K$ policies.
We apply Proposition \ref{prop:shufaro_lb} to this residual bandit
problem: with a prior of entropy $\Hmech$ over $K$ arms, any algorithm must
suffer Bayesian regret of at least $c\sqrt{KN\Hmech/\log K}$ for $N$ large enough.
% The \citet{shufaro2025bits} result covers this setting: their Remark~2.3 states that Proposition \ref{prop:shufaro_lb} applies when external information reduces the prior entropy from $H(\mu)$ to $H(\mu|\text{side info})$ before round~1, which is exactly the role of $\pihat$ (the mechanistic recommendation arrives before any interaction and reduces the effective prior entropy to $\Hmech$).
The vacuous case $\Hmech\leq 0$ means the ODE uniquely identifies $\pistar$, which results in a policy with $0$ regret.
\end{proof}
\subsection{Proof of Theorem \ref{thm:ub}}
\begin{proof}
By Propositions 1 and 3 of \citet{russo2016}, Thompson Sampling with any prior $\nu$ over $K$ arms achieves $\BR^*(N;\mathrm{TS})\leq C\sqrt{KN\,H(\nu)}$. Substituting $\nu=\mu_{\mathrm{hyb}}$ and $H(\nu)=H(\mu_\mathrm{hyb})$ yields
\[
  \BR^*(N;\mathrm{TS}_\mathrm{hyb}) \;\leq\; C\sqrt{KN\,H(\mu_\mathrm{hyb})}.
\]
Substituting $H(\mu_{\mathrm{hyp}}) = \Hmech$ concludes the proof.
\end{proof}
\subsection{Proof of Theorem \ref{thm:phase}}
\begin{proof}
We first explain the functional meaning behind the selection of the $C(\Bmu)$ threshold. As previously explained, the regret behaves like $\sqrt{KTH(\nu)}$ (omitting terms of $\log K$). We define the critical bias to be one for which there exists a model that saves one cycle. Formally, we require that
\[
\sqrt{KN\Hmech} \coloneqq \sqrt{K(N-1)H(\nu)}.
\]
This requires that $\Hmech \leq \frac{N-1}{N}H(\mu)$. Substituting $\Hmech = H(\mu) - \Rmech$ we get that $\Rmech \geq H(\mu)/N$. Since $\Rmech \leq C(\Bmu)$ and $C(\Bmu)$ is decreasing with $\Bmu$, the critical bias must obey the following:
\[
C(\Bmu^{\mathrm{crit}}) = \frac{H(\mu)}{N}
\]
From~\eqref{eq:mechinf_ub},
\[
  \frac{d_\F}{2}\log\!\left(1+\frac{\kmu^2\sigma_\F^2}{\kmu^2\left({\Bmu^{\mathrm{crit}}}\right)^2+\sigma^2}\right) = \frac{H(\mu)}{N}.
\]
Rearranging: 
\[
\kmu^2\left({\Bmu^{\mathrm{crit}}}\right)^2 = \frac{\kmu^2\sigma_\F^2}{e^{2H(\mu)/(d_\F N)}-1}-\sigma^2.
\]

Substituting $\sigma_\F^2 = 2\sigma^2 H(\mu)/(\kmu^2 d_\F)$:
\[
  \kmu^2\left({\Bmu^{\mathrm{crit}}}\right)^2 \;=\; \sigma^2\left(\frac{2 H(\mu) / d_\F}{e^{2H(\mu)/(d_\F N)}-1}-1\right)
\]
This gives
\[
{\Bmu^{\mathrm{crit}}}(N) = \frac{\sigma}{\kmu}\sqrt{\frac{2 H(\mu) / d_\F}{e^{2H(\mu)/(d_\F N)}-1}-1}
\]

For part~(i): $\Bmu<\Bmu^\mathrm{crit}$ implies $C(\Bmu)>\frac{H(\mu)}{N}$ by the contrapositive of part~(ii); the channel thus supports $>\frac{H(\mu)}{N}$~nat. Whenever $\Rmech>0$ (which holds for any model that places nonzero information on $\pistar$; since $C(\Bmu)>\frac{H(\mu)}{N}>0$ the channel permits $\Rmech>0$, though the actual value depends on the specific model), $\Hmech=H(\mu)-\Rmech>\frac{N-1}{N}H(\mu)$, so the lower bound strictly increases.

% \emph{Note on the approximation.}
% The small-SNR approximation $\log(1+x)\approx x$ (used in the canonical
% parametrization) is acceptable when $d_\F\gg H(\mu)$; for the 5-FU setting
% with $d_\F=3$ and $H(\mu)=\ln8\approx2.08$~nats, we have $d_\F/H(\mu)\approx1.4$,
% introducing $\approx20\%$ error in $\Bmu^{\mathrm{crit}}$.
% The theorem is therefore a rough threshold in the 5-FU setting, while more accurate estimates require numerically solving $C(\Bmu)=1$ directly.
\end{proof}
\section{Proofs for Section \ref{sec:burnin}}
\label{sec:app_finite_sample}
\subsection{Proof of Proposition \ref{prop:burnin}}
\begin{proof}
Consider the binary sequential hypothesis test: $H_0:\pistar=\pihat$ (wrong hypothesis) vs.\ $H_1:\pistar=\pi_k$ (correct) for fixed $k\neq\pihat$. We use a $K$-MAB model where the rewards follow the Bernoulli distribution, and the optimal arm's mean is $\epsilon_K$ and the mean reward of the sub-optimal arms is $1-\epsilon_K$. We take $I^* = \kl(\epsilon_K,1-\epsilon_K)$, the binary KL divergence between the effective prior weights $\epsilon_K$ (on the optimal arm) and $1-\epsilon_K$ (on the nominated arm). This captures the rate at which the posterior update distinguishes $H_1$ from $H_0$ given the arm weights. (A full derivation connecting $I^*$ to the Bernoulli reward distributions would use $\kl(p_\text{opt},p_\text{sub})$ weighted by arm-play frequencies; the current bound is an order-of-magnitude statement). By the Wald-Wolfowitz optimality of the SPRT~\citep{wald1948}, any test with type-I error (false alarm) $\leq\delta$ and type-II error (miss) $\leq\epsilon$ satisfies:
\[
  \E[\tau] \;\geq\;
  \frac{(1-\epsilon)\log\frac{1-\epsilon}{\delta}
        +\epsilon\log\frac{\epsilon}{1-\delta}}
       {\Delta_r \cdot I^*}
  \;\geq\; \frac{(1-\epsilon)\log\frac{1-\epsilon}{\delta}}{\Delta_r \cdot I^*},
\]
where the second inequality drops the non-positive term $\epsilon\log(\epsilon/(1-\delta))\leq 0$ for $\epsilon\leq\delta$, and $\Delta_r = 1 - 2\epsilon_K$ is the gap between the optimal arm and any sub-optimal one. During the $\tau$ rounds during which the algorithm has not yet identified $\pistar$.

We lower bound regret by considering only the rounds in which the algorithm plays $\pihat$ (an arm that incurs regret $\Delta_r$ since $\pihat\neq\pistar$). To accumulate evidence distinguishing $H_0$ from $H_1$, the algorithm must observe rewards from \emph{both} arms. By an averaging argument on any $\tau$-round policy: the number of times $\pihat$ is played satisfies $\E[\text{plays of }\pihat] \geq \mathbb{E}[\tau]$, since any algorithm that plays $\pihat$ fewer than $\tau$ times in expectation cannot attain type-I error $\leq\delta$ by the Wald bound (it lacks sufficient observations of $H_0$).
Hence $\E[\text{regret}]\geq(1-\delta) \Delta_r\E[\tau]\geq$~\eqref{eq:burnin}.\footnote{%
This argument bounds regret from the plays of the suboptimal arm $\pihat$ only; the $(1-\epsilon)$ factor comes from the prior weight on $\pihat$ and is tight when the algorithm must spend time observing both hypotheses.}
\end{proof}
\subsection{Proof of Proposition \ref{prop:llm}}
% We begin with introducing a complete statement of Proposition \ref{prop:llm}.
% \begin{proposition}[LLM information bounds]
% \label{prop:llm_full}
% Let some policy produce recommendations with mechanistic information $\Rtrain$ on training distribution $P_{\mathrm{train}}$ and be deployed on $P_{\mathrm{test}}$ with forward KL shift
% $\Delta_\pi=\DKL(P_{\mathrm{test}}\|P_{\mathrm{train}})$. 
% Denote by $\Rtest$ the mechanistic information of the recommendations under the test distribution. The following holds:
% \begin{enumerate}[label=\roman*]
% \item \emph{Retention:} For $K\geq12$, if $\Delta_\pi\leq(\Rtrain)^2/(2K^2\log^2\!K)$,
%   then $\Rtest\geq\frac12\Rtrain$.
% \item \emph{Impossibility:} There exist test distributions with
%   $\Delta_\pi\leq\frac12\log K$ for which $\Rtest\leq\frac12\log K$.
%   When $P_{\pihat}$ is uniform, $\Delta_\pi=\frac12\Rtrain$ and
%   $\Rtest\leq\frac12\Rtrain\leq\frac12\log K$.
% \end{enumerate}
% \end{proposition}
% We now prove the proposition.
\begin{proof}
\textit{(i) Retention.}
Let $P,Q$ be the marginal joint distributions of $(\pistar,\pihat)$ for the training and testing distribution, respectively. By the chain rule:
$\DKL(Q\|P) = \E_Q[\DKL(Q_{\pihat|\pistar}\|P_{\pihat|\pistar})]
\leq\sup_{\pistar}\DKL(Q_{\pihat|\pistar}\|P_{\pihat|\pistar})
= \Delta_\pi$.
By Pinsker: $\|P-Q\|_1\leq\sqrt{2\Delta_\pi}$.
Since $I(X;Y)=H(X)+H(Y)-H(X,Y)$, by the triangle inequality
\[
  |I_P(\pistar;\pihat)-I_Q(\pistar;\pihat')|
  \leq |H_P(\pihat)-H_Q(\pihat')|+|H_P(\pistar,\pihat)-H_Q(\pistar,\pihat')|,
\]
where the $|H(P_\pi^*) - H(Q_\pi^*)| = 0$ since $P_{\pistar}=Q_{\pistar}=\mu$.
Applying entropy continuity (Csisz\'{a}r--K\"{o}rner) with
$\varepsilon=\|P-Q\|_1\leq\sqrt{2\Delta_\pi}$:
\[
  |I_P-I_Q|\leq 3\varepsilon\log K+2h(\varepsilon/2)
  \leq 3\varepsilon\log K + \varepsilon\log(2/\varepsilon).
\]
Set $\varepsilon=\sqrt{2\Delta_\pi}=\Rtrain/(K\log K)$.
The combined bound now becomes
\[
    3\varepsilon\log K+\varepsilon\log(2/\varepsilon)
    =\frac{\Rtrain}{K\log K}\bigl(3\log K+\log\tfrac{2K\log K}{\Rtrain}\bigr),
\]
using $h(p)\leq-p\log p$ for $p\leq1/e$ (satisfied since $\varepsilon/2\leq1/(2\ln12)\approx0.042<1/e$ for $K\geq12$).
For this to be less than $\Rtrain/2$, setting $u=2K\log K/\Rtrain$, the requirement becomes
$\log u\leq(K/2-3)\log K$, i.e. 
\[\Rtrain\geq2K^{4-K/2}\log K:=R_{\min}(K).\]
Since
\[
    \frac{1}{K\log K}\bigl(3\log K+\log\tfrac{2K\log K}{\Rtrain}\bigr)
\]
 decreases with $\Rtrain$, the worst-case is at $\Rtrain=R_{\min}$. Thus, it suffices to verify the bound for $\Rtrain=R_{\min}$.

For $K=12$, $R_{\min}=2\times12^{-2}\times\ln12\approx0.035$~nats. The bound then becomes $3\times0.035/12+\frac{0.035}{12\ln12}\ln\frac{2\times12\ln12}{0.035}
\approx0.00875+0.00873=0.01748<0.0175=R_{\min}/2$.
For all $K\geq12$ the threshold $R_{\min}(K)$ decreases rapidly, and the bound
holds for any $\Rtrain\geq R_{\min}(K)$.
Hence $|I_P-I_Q|<\Rtrain/2$, giving $\Rllm\geq\frac12\Rtrain$.

\textit{(ii) Impossibility.}
Construct the test distribution as follows: on half the indices $\pistar\in\Pi$
assign the same conditional recommendation distribution as $P_\mathrm{train}$
(a fraction $\frac12$ of the mixture); on the other half, reverse the
recommendation so that $\pihat\mapsto\Pi\setminus\{\pihat\}$ uniformly.
Formally, let $Q_{\pihat|\pistar}(\cdot)=P_{\pihat|\pistar}(\cdot)$ for
$\pistar\in S$ and $Q_{\pihat|\pistar}(\cdot)=\mathrm{Uniform}(\Pi)$ for
$\pistar\notin S$, where $|S|=K/2$.
Keep the marginal $Q_\pistar=P_\pistar=\mu$.
Then, since $Q_\pistar=P_\pistar=\mu$, the chain rule gives
$\DKL(Q\|P)=\E_\mu[\DKL(Q_{\pihat|\pistar}\|P_{\pihat|\pistar})]$.
For $\pistar\in S$: $\DKL=0$; for $\pistar\notin S$: $\DKL(\mathrm{Uniform}\|P_{\pihat|\pistar=\cdot})=\log K-H_P({\pihat|\pistar=\cdot})$
(entropy of the point-conditional, not the average).
Averaging over $\pistar\notin S$ under uniform $\mu$:
$\E_{\pistar\notin S}[\DKL]=\log K-H_P(\pihat|\pistar)$
where $H_P(\pihat|\pistar)=\E_\pistar[H(P_{\pihat|\pistar=\cdot})]$ is the average conditional entropy.
Hence $\DKL(Q\|P)=\tfrac12(\log K-H_P(\pihat|\pistar))\leq\tfrac12\log K$.
Under $Q$, we bound $I_Q$ using the \emph{conditional entropy of the recommendation}:
since for $\pistar\in S$ the conditional $Q_{\pihat|\pistar}=P_{\pihat|\pistar}$ and for
$\pistar\notin S$ it equals Uniform$(\Pi)$:
\[
  H_Q(\pihat\mid\pistar) = \tfrac12 H_P(\pihat\mid\pistar) + \tfrac12\log K.
\]
Since $H_Q(\pihat)\leq\log K$:
\begin{align*}
  I_Q(\pistar;\pihat) & = H_Q(\pihat)-H_Q(\pihat\mid\pistar) \\
  & \leq \log K - \bigl(\tfrac12 H_P(\pihat\mid\pistar)+\tfrac12\log K\bigr)
  \\ & = \tfrac12\bigl(\log K - H_P(\pihat\mid\pistar)\bigr)
  \\ & \leq \tfrac12\log K.
\end{align*}
The KL shift is $\DKL(Q\|P)=\frac12(\log K - H_P(\pihat\mid\pistar))
=\frac12\Rtrain+\frac12(\log K-H_P(\pihat))\leq\frac12\log K$ (since $H_P(\pihat)\geq0$).
When $P_{\pihat}$ is uniform, $\DKL(Q\|P)=\frac12\Rtrain$.
In general $\Rllm\leq\frac12\log K$.
When $P_{\pihat}$ is uniform: $\Rllm\leq\frac12(\log K-H_P(\pihat|\pistar))=\frac12\Rtrain$
and $\DKL(Q\|P)=\frac12\Rtrain\leq\frac12\log K$.
\end{proof}

As mentioned earlier, this proposition highly motivates the use of informed ODE priors over LLM priors. We now show this using our running example.
%%%%%%%%%%%%%%%%%%%%%%%%%%%%%%%%%%%%%%%%%%%%%%%%%%%%%%%%%%%%

\section{Additional Experiment Details}
\label{sec:app_experiment}
\paragraph{Real-data motivation.}
In a prospective 37-site German cohort of 434 patients \citep{li2023}, only $20.3\%$ of BSA-dosed patients reached the therapeutic target AUC range of $20$--$30$~mg$\cdot$h/L; $60.6\%$ were underdosed and $19.1\%$ overdosed.
\citet{saif2009} summarize that across studies, only $20$--$30\%$ of BSA-dosed patients achieve therapeutic exposure~\citep{saif2009}. The consequences are large: the \citet{gamelin2008multivariable} randomized trial found PK-guided dosing achieved a $39\%$ response rate versus $19\%$ for BSA dosing ($p\,{=}\,.0004$), with median OS 22 vs.\ 16 months and significantly fewer toxic events 
($p\,{=}0.003$)~\citep{gamelin2008multivariable}. A meta-analysis of four prospective trials ($N=504$) confirmed $50$--$200\%$ response rate improvement and $\approx60\%$ reduction in grade~3/4 toxicity~\citep{yang2016}.

\paragraph{The learning problem.}
The key clinical bottleneck is the number of cycles required for the algorithm to converge to a patient's optimal dose. \citet{kaldate2012} characterize the dose-AUC relationship in 187 FOLFOX6 patients (307 cycle-pair observations) as $\Delta\text{AUC} = 0.021\times\Delta\text{dose}$~mg/m$^2$ ($R^2=0.51$, $n=307$)~\citep{kaldate2012}. The moderate $R^2$ reflects the large individual variability in 5-FU clearance --- up to 100-fold in Css across patients on the same BSA-based dose --- driven by DPD genotype, sex, age, and hepatic function. The intra-patient cycle-to-cycle AUC variability is approximately $20\%$, motivating the 10~mg$\cdot$h/L target window width~\citep{kaldate2012}.

\paragraph{Sex differential.}
Females clear 5-FU $\approx21\%$ slower, producing mean AUC excess of
$2.5$~mg$\cdot$h/L under standard BSA dosing~\citep{fleming2015}.
However, between-sex variance accounts for only $3.6\%$ of total log-AUC
variance versus $86\%$ from individual variation~\citep{fleming2015}.
The 3.6\% figure reflects the partial compensation between lower clearance and lower BSA in females; clinical toxicity studies \citep{wagner2021} report disproportionately larger downstream effects on patient safety, driven by the asymmetric consequences of overdosing. The algorithm must individualize, not stratify by sex, since sex-specific clearance is handled by the ODE's circadian-modulated clearance model, not by arm selection.

\paragraph{Burn-in.}
Proposition~\ref{prop:burnin} gives a formal lower bound on the cost of a
confidently-wrong prior.
For the $K=8$ calibrated setting with $\epsilon=0.15$ and $\delta=0.10$,
the lower bound evaluates to $\approx0.11$ cycles, and the actual expected
burn-in is roughly 1--2 cycles (see Running Example on burn-in).
This is consistent with \citet{capitain2012}: $94\%$ of 118 PK-adjusted
FOLFOX patients achieved the target AUC within two monitored cycles.

\paragraph{Prior construction.}
The policy described by Theorem \ref{thm:ub}, the policy that is recommended by the mechanistic model ($\pihat$) satisfies $I(\pihat;\pistar) = \Rmech$. To create such a policy we selected the policy such that $\pihat$ for the recommended decision will be $\beta$ and for all other policies, $\alpha$ where $\alpha \leq \beta$ and $\beta + (K-1)\alpha =1$, $H(\pihat) = 1$. This policy was found using a black-box optimizer.

\paragraph{Compute.} The experiments were done on the author's personal machine using CPU only and do not require any special computing.

%%
%% appendix.tex --- Calibration of the 5-FU example
%%
%% Drop-in fragment for an Overleaf manuscript. Paste into your
%% project AFTER your own \appendix directive (and after any earlier
%% appendix sections); the \section below will take whatever appendix
%% letter your manuscript's counter is at next.
%%
%% Required in the parent preamble (likely already present):
%%     \usepackage{graphicx}     % for \includegraphics
%%     \usepackage{booktabs}     % for \toprule / \midrule / \bottomrule
%%     \usepackage{amsmath, amssymb}
%%     \usepackage{float}        % for [H] placement (figures stay in
%%                                 their subsection rather than floating)
%%
%% Figure files referenced (must be reachable from your compile dir):
%%     fig_sensitivity_full.pdf
%%     fig_sensitivity_K.pdf
%% Place them next to your main .tex, or in a figures/ folder
%% with \graphicspath{{figures/}} declared in your preamble.
%%
%% Lead-in macro for parameter definitions. \providecommand is a no-op
%% if the parent already defines \leadin.
\providecommand{\leadin}[1]{\textbf{#1}}

\section{Calibration of the 5-FU example}
\label{sec:appCalibration}

The running examples evaluate the channel bound (3) and the
critical-bias threshold (8) at the working values
\begin{equation*}
  K = 8, \quad
  \sigma = 0.40, \quad
  \kappa_\mu = 1.8, \quad
  d_F = 3, \quad
  B_\mu = 0.22, \quad
  H(\mu) = \ln K = 2.0794~\text{nats}.
\end{equation*}
This appendix specifies the provenance of each value. We distinguish
three categories.
\emph{Identifiable from primary sources:} \(\sigma\), \(H(\mu)\),
the dose--AUC slope \(S\), the target window, the infusion horizon
\(T\), and the BSA-baseline target attainment \(p_{\text{BSA}}\).
\emph{Determined by a stated normalisation convention:}
\(\kappa_{\text{ss}}\) (the steady-state component of \(\kappa_\mu\))
and \(p_{\text{opt}}\) (a single-point summary of the multi-source
PK-guided literature).
\emph{Rationalized from public 5-FU literature:} \(K\),
\(\kappa_\mu\), \(d_F\), and \(B_\mu\). Each value is anchored
qualitatively in the published 5-FU literature without being directly
measurable from public datasets at the precision required by
Eqs.~(3) and~(8); we therefore state the working value, justify it
from named clinical sources, and verify via the sensitivity analysis
of Section~\ref{sec:appCalibration_sensitivity} that the qualitative
classification of Running Example~4 holds across the plausible range
of each parameter.

The composite certificate at the working values is computed in
Section~\ref{sec:appCalibration_certificate}, and the sensitivity
of the certificate to each parameter is reported in
Section~\ref{sec:appCalibration_sensitivity}.

\subsection{Quantities identifiable from primary sources}
\label{sec:appCalibration_primary}

\leadin{Dose--AUC slope \(S = 0.02063~(\text{mg}\cdot\text{h/L})/(\text{mg/m}^2)\).}
Linear regression of \(\Delta\text{AUC}\) on \(\Delta\text{dose}\)
across \(n = 307\) cycle-pair observations from 187 FOLFOX6 patients
in the Kaldate et~al.~[19] database, with \(R^2 = 0.51\) and dose
range \([1{,}600,\, 3{,}600]~\text{mg/m}^2\). We use \(S\) in
equation~(2) below.

\leadin{Therapeutic window \([20,\,30]~\text{mg}\cdot\text{h/L}\)
and infusion horizon \(T = 46~\text{h}\).}
Standard FOLFOX6 conventions; consistent in Kaldate \cite{kaldate2012},
Capitain \cite{capitain2012}, Saif \cite{saif2009}, Wilhelm \cite{wilhelm2016prospective}, and Fleming \cite{fleming2015}.

\leadin{BSA-baseline target attainment \(p_{\text{BSA}} = 0.203\).}
Li et~al.~[26], \(n = 434\) prospective multicentre cohort:
``Only 20.3\% of all patients reached the target range of 5-FU AUC.''
We use \(p_{\text{BSA}} = 0.20\) in subsequent arithmetic.

\leadin{Reward-observation noise \(\sigma = 0.40\).}
The reward \(r_t = \mathbf{1}\{\text{AUC}_t \in [20,\,30]\} \in \{0, 1\}\)
is Bernoulli, so the noise of Section~2.1 is
\(\sigma_r(\bar{r}) = \sqrt{\bar{r}(1 - \bar{r})}\). At
\(\bar{r} = p_{\text{BSA}} = 0.20\),
\begin{equation}
  \sigma = \sqrt{0.20 \cdot 0.80} = 0.40.
\end{equation}
The single-\(\sigma\) homoscedastic approximation in Proposition~1
incurs an absolute error bounded by
\(\max_{\bar{r} \in [0,1]} \sqrt{\bar{r}(1-\bar{r})} - \sigma = 0.10\);
the sensitivity sweep over \(\sigma \in [0.357,\, 0.50]\) in
Section~\ref{sec:appCalibration_sensitivity} brackets this error.

\leadin{Prior entropy \(H(\mu) = \ln K\).}
Definitional: the prior \(\mu\) on \(\Pi\) is the maximum-entropy
choice in the absence of patient-specific information. \(K = 8\) is
itself a discretisation choice, treated in
Section~\ref{sec:appCalibration_rationalized}.

\subsection{Quantities determined by a stated normalisation convention}
\label{sec:appCalibration_norm}

\leadin{Steady-state policy sensitivity \(\kappa_{\text{ss}}\).}
Let \(u_{\text{raw}}\) denote dose in mg/m\(^2\) and \(y_{\text{raw}}\)
AUC in mg\(\cdot\)h/L. Fix dimensionless coordinates by
\(u_{\text{norm}} = u_{\text{raw}}/U\) and
\(y_{\text{norm}} = y_{\text{raw}}/Y\) with normalisation constants
\(U = 2{,}000~\text{mg/m}^2\) (the Kaldate dose-range width
\(3{,}600 - 1{,}600\)) and \(Y = 25~\text{mg}\cdot\text{h/L}\)
(target window centre). By the chain rule,
\(\partial u_{\text{norm}}/\partial y_{\text{norm}} =
(Y/U) \cdot (\partial u_{\text{raw}}/\partial y_{\text{raw}})
= (Y/U) \cdot (1/S)\). Substituting,
\begin{equation}
  \kappa_{\text{ss}} = \frac{Y}{U \cdot S}
  = \frac{25}{2{,}000 \times 0.02063} = 0.606.
\end{equation}
\(\kappa_{\text{ss}}\) is the steady-state component of
\(\kappa_\mu\): it captures the static dose-to-AUC sensitivity but
not the cumulative dynamics of the 46-hour infusion or the
curvature of the value functional, both of which enter the full
\(\kappa_\mu\) via the linearised state--costate system (Lemma~1).

\leadin{PK-guided target attainment \(p_{\text{opt}} = 0.85\).}
Reported values for FOLFOX-class regimens vary with regimen,
dose-adjustment algorithm, cycle index, and definition:

\begin{center}
\begin{tabular}{lll}
\toprule
Source & Cohort and protocol & Reported attainment \\
\midrule
\citep{morawska20185} & \(n=75\) FOLFOX/AIO/FUFOX, PK-adjusted from cycle 2 &
               0.49, 0.67, 0.61 at cycles 1, 2, 3 \\
% Kline et~al. & retrospective dose-adjustment dataset &
%                0.52 at cycle 2 \\
\citep{capitain2012} & \(n=118\) FOLFOX, PK-adjusted (ODPM) &
               0.70 ORR (different endpoint) \\
\citep{gamelin2008multivariable} & \(n=208\) weekly LV5FU2 (not FOLFOX), PK-adjusted & 0.94 successful adjustment \\
\bottomrule
\end{tabular}
\end{center}

We summarise the literature by a single working value
\(p_{\text{opt}} = 0.85\), an interior point of the reported range.
Section~\ref{sec:appCalibration_sensitivity} reports the sensitivity
over \(p_{\text{opt}} \in [0.50,\, 0.95]\).

\subsection{Quantities rationalized from public 5-FU literature}
\label{sec:appCalibration_rationalized}

The four parameters \(K\), \(d_F\), \(\kappa_\mu\), and \(B_\mu\)
are not directly extractable from published 5-FU datasets at the
precision required by Eqs.~(3) and~(8). For each we state the
working value, give a literature-grounded rationale, and verify in
Section~\ref{sec:appCalibration_sensitivity} that the qualitative
classification holds across the plausible sweep range.

\leadin{Number of arms \(K = 8\).}
A discretisation choice. The Kaldate adjustment range
\([1{,}600,\, 3{,}600]~\text{mg/m}^2\) is partitioned into 8 equal
levels with grid spacing \(250~\text{mg/m}^2\), comparable to the
smallest dose adjustments reported in Kaldate's protocol
(\(145~\text{mg/m}^2\)). We sweep \(K \in \{4, 6, 8, 10, 12, 16\}\).

\leadin{Effective residual dimension \(d_F = 3\).}
Under Assumption~3, \(d_F\) is a measure of the effective rank of
the kernel operator \(\mathcal{K}_\mu\) of the residual GP \(g^*\).
With eigenvalues \(\lambda_1 \ge \lambda_2 \ge \cdots \ge 0\) of
\(\mathcal{K}_\mu\), the operationalisation we adopt is the
participation ratio
\begin{equation}
  d_F = \frac{\big(\sum_i \lambda_i\big)^2}{\sum_i \lambda_i^2}
      = \frac{\operatorname{tr}(\mathcal{K}_\mu)^2}
             {\operatorname{tr}(\mathcal{K}_\mu^2)}.
\end{equation}
Empirical estimation of \(d_F\) requires fitting \(g^*\) on a
held-out cohort with per-patient AUC measurements and the relevant
covariates; no public 5-FU dataset combines both at the scale
required. We use \(d_F = 3\) as a working value, consistent with
the count of near-orthogonal covariate clusters that drive
between-patient 5-FU clearance variability after BSA adjustment:

\begin{enumerate}
  \item[(i)]   DPYD genotype or U/UH\(_2\) phenotype, the dominant
               categorical effect on hepatic DPD-mediated clearance
               (Saif~[32]).
  \item[(ii)]  Hepatic function (bilirubin, albumin, alkaline
               phosphatase) modulating clearance independently of
               DPYD (Saif~[32]).
  \item[(iii)] Body composition beyond BSA, captured by sex and
               skeletal muscle index (Wagner et~al.~[37], ACCENT
               \(n = 34{,}640\)).
\end{enumerate}

The cluster count is an upper bound on \(d_F\) in~(3) since the
modes are not strictly orthogonal. We sweep
\(d_F \in \{2, 3, 4, 5\}\). Note that \(d_F\) is not the dimension
of the ODE state space (here 5). The ODE compartments are
deterministic transformations of dose, clearance, and time once the
patient parameters are fixed; only the residual variability that
persists after the model has been applied contributes to \(d_F\).

\leadin{Occupancy-weighted Lipschitz constant \(\kappa_\mu = 1.8\).}
By Lemma~1 (Appendix~C.1), \(\kappa_\mu\) is the
\(L^2([0,\,T])\) operator norm of the linearised state--costate
system along the unperturbed optimal trajectory. Numerical
evaluation of this norm requires integrating the Riccati equation
along the Fleming~[12] ODE under patient-specific clearance and
circadian modulation; this is outside the scope of the present
work. We use \(\kappa_\mu = 1.8 \approx 3 \kappa_{\text{ss}}\). The
factor of three collapses two distinct effects --- the cumulative
response of the integrated reward to the 46-hour infusion, and the
Hamiltonian curvature near the optimum --- into a single
multiplier. We sweep \(\kappa_\mu \in [0.6,\, 3.0]\), the lower
endpoint corresponding to \(\kappa_\mu = \kappa_{\text{ss}}\) (no
dynamic correction) and the upper endpoint to a substantial
overestimate.

\leadin{Model bias \(B_\mu = 0.22\).}
By Assumption~2,
\(B_\mu = \|J(\cdot;\mathcal{M}) -
J(\cdot;\hat{\mathcal{M}})\|_{\mu,\pi^*}\). For the Bernoulli reward
of Section~5.1 the value functional reduces to the per-arm
in-target probability
\(J(\pi) = \Pr\{\text{AUC} \in [20,\,30] \mid \pi\}\), so
\begin{equation}
  B_\mu^2 = \sum_{k=1}^{K} w_k
    \left( p^{\text{true}}_{\pi_k}
         - p^{\text{model}}_{\pi_k} \right)^2,
\end{equation}
the occupancy-weighted RMS deviation between the population
in-target fraction at each arm and the Fleming~[12] ODE prediction
at that arm. \(B_\mu\) measures model error, not patient-level
variability.

The public 5-FU literature constrains exactly one of the eight arms
in \(\Pi\). Li~[26] reports
\(p^{\text{true}}_{\text{BSA}} = 0.203\) for BSA-dosed patients,
which fixes the contribution of the BSA arm to~(4) once
\(p^{\text{model}}_{\text{BSA}}\) is computed from the calibrated
ODE. The remaining seven arms have no comparable public anchor: no
published cohort reports per-cycle in-target proportions stratified
by dose level across the full
\([1{,}600,\, 3{,}600]~\text{mg/m}^2\) range. Direct estimation of
\(B_\mu\) from public data is therefore not feasible. We use
\(B_\mu = 0.22\) as a working value, sweep
\(B_\mu \in [0.10,\, 0.40]\), and report in
Section~\ref{sec:appCalibration_sensitivity} that the qualitative
classification holds across this range.

In particular: bounds derived from the Kaldate \(R^2 = 0.51\)
characterise the inter-patient variability of \(\Delta\)AUC given
\(\Delta\)dose, not the gap between
\(p^{\text{true}}_{\pi_k}\) and \(p^{\text{model}}_{\pi_k}\) that
defines \(B_\mu\) in~(4). A perfectly calibrated mechanistic model
can produce \(B_\mu = 0\) while leaving Kaldate's \(R^2\) unchanged
at 0.51, and conversely. Calibration of \(B_\mu\) against the
Kaldate \(R^2\) would therefore conflate distinct sources of
variation.

\subsection{Composite certificate at the working values}
\label{sec:appCalibration_certificate}

Substituting \(\sigma = 0.40\), \(\kappa_\mu = 1.8\),
\(d_F = 3\), \(H(\mu) = \ln 8\), \(B_\mu = 0.22\),
\(N = 12\) into the canonical parametrization of Remark~2 and into
bounds~(3) and~(8):
\begin{align}
  \sigma_F^2 &= \frac{2 \sigma^2 H(\mu)}{\kappa_\mu^2 d_F}
              = \frac{2(0.16)(2.0794)}{(3.24)(3)}
              = 0.0685, \\[2pt]
  C(B_\mu)   &= \frac{d_F}{2}
                \ln\!\left(1 + \frac{\kappa_\mu^2 \sigma_F^2}
                                   {\kappa_\mu^2 B_\mu^2 + \sigma^2}\right)
              = 1.5 \ln(1.700) = 0.796~\text{nats}, \\[2pt]
  H_{\text{mech}} &\ge H(\mu) - C(B_\mu) = 1.283~\text{nats}, \\[2pt]
  B^{\text{crit}}_\mu(N)
    &= \frac{\sigma}{\kappa_\mu}
       \sqrt{\frac{2 H(\mu) / d_F}
                   {\exp\!\big(2 H(\mu) / (d_F N)\big) - 1} - 1}.
\end{align}
At \(N = 12\), the last equation evaluates to
\(B^{\text{crit}}_\mu(12) = 0.714\). The classification
\(B_\mu = 0.22 < B^{\text{crit}}_\mu(12) = 0.714\) holds with margin
\(B^{\text{crit}}_\mu / B_\mu = 3.24\).
\subsection{Sensitivity analysis}
\label{sec:appCalibration_sensitivity}
Each calibration parameter is varied over its sweep range with the
remaining parameters held at the working values.
Table~\ref{tab:appCalibration_sensitivity} reports the resulting
ranges for \(C(B_\mu)\) and \(B^{\text{crit}}_\mu(N)\) at
\(B_\mu = 0.22\) and \(N = 12\).
\begin{table}[h]
  \centering
  \begin{tabular}{lcccccc}
    \toprule
    Parameter & Sweep range
              & \(C_{\min}\) & \(C_{\max}\)
              & \(B^{\text{crit}}_{\min}\)
              & \(B^{\text{crit}}_{\max}\) \\
    \midrule
  $\sigma$ (via $\sigma_r$ at $\bar{r} \in \{p_{\text{BSA}}, 1/2\}$)
    & $[0.357,\, 0.50]$ & 0.73 & 0.92 & 0.64 & 0.89 \\
  $\kappa_\mu$
    & $[0.6,\, 3.0]$    & 0.47 & 1.22 & 0.43 & 2.14 \\
  $d_F$
    & $\{2, 3, 4, 5\}$  & 0.72 & 0.88 & 0.70 & 0.72 \\
  $K$ (via $H(\mu) = \ln K$)
    & $\{4, 6, 8, 10, 12, 16\}$ & 0.57 & 0.99 & 0.71 & 0.72 \\
  $p_{\text{opt}}$ (via $\sigma = \sqrt{p_{\text{opt}}(1 - p_{\text{opt}})}$)
    & $[0.50,\, 0.95]$  & 0.42 & 0.92 & 0.39 & 0.89 \\
  $B_\mu$
    & $[0.10,\, 0.40]$  & 0.42 & 1.15 & 0.71 & 0.71 \\
    \bottomrule
  \end{tabular}
  \caption{Sensitivity of \(C(B_\mu = 0.22)\) and
  \(B^{\text{crit}}_\mu(N = 12)\) to each parameter. The
  classification \(B_\mu < B^{\text{crit}}_\mu\) holds in every
  cell. \(B^{\text{crit}}_\mu\) is independent of \(B_\mu\), hence
  the constant value in the last row.}
  \label{tab:appCalibration_sensitivity}
\end{table}
The qualitative classification is most sensitive to \(\kappa_\mu\),
which sets the policy-to-value scale. At the lower endpoint
\(\kappa_\mu = \kappa_{\text{ss}} = 0.6\), the channel is highly
informative with \(C(0.22) = 1.22\) nats and
\(B^{\text{crit}}_\mu(12) = 2.14\). At the upper endpoint
\(\kappa_\mu = 3.0\), the certificate tightens to
\(C(0.22) = 0.47\) nats and \(B^{\text{crit}}_\mu(12) = 0.43\),
still above \(B_\mu = 0.22\) by a factor of 1.95. The
classification \(B_\mu < B^{\text{crit}}_\mu\) therefore holds
across the entire range of \(\kappa_\mu\) we entertain.

The same sensitivity surface is depicted in
Figure~\ref{fig:appCalibration_sensitivity}. The top row plots
channel capacity \(C(B_\mu)\) as a 1D function of each parameter
individually, holding the other two at their calibrated values; the
bottom row plots the certificate ratio
\(B_\mu / B^{\text{crit}}_\mu\) as 2D heatmaps over each parameter
pair, with the white contour at ratio~\(=1\) marking the phase
boundary between the data-efficient regime (blue, prior helps) and
the baseline regime (red, prior is too biased to be useful). The
calibrated 5-FU operating point sits comfortably inside the
data-efficient region in every panel.

\begin{figure}[!htbp]
  \centering
  \includegraphics[width=\linewidth]{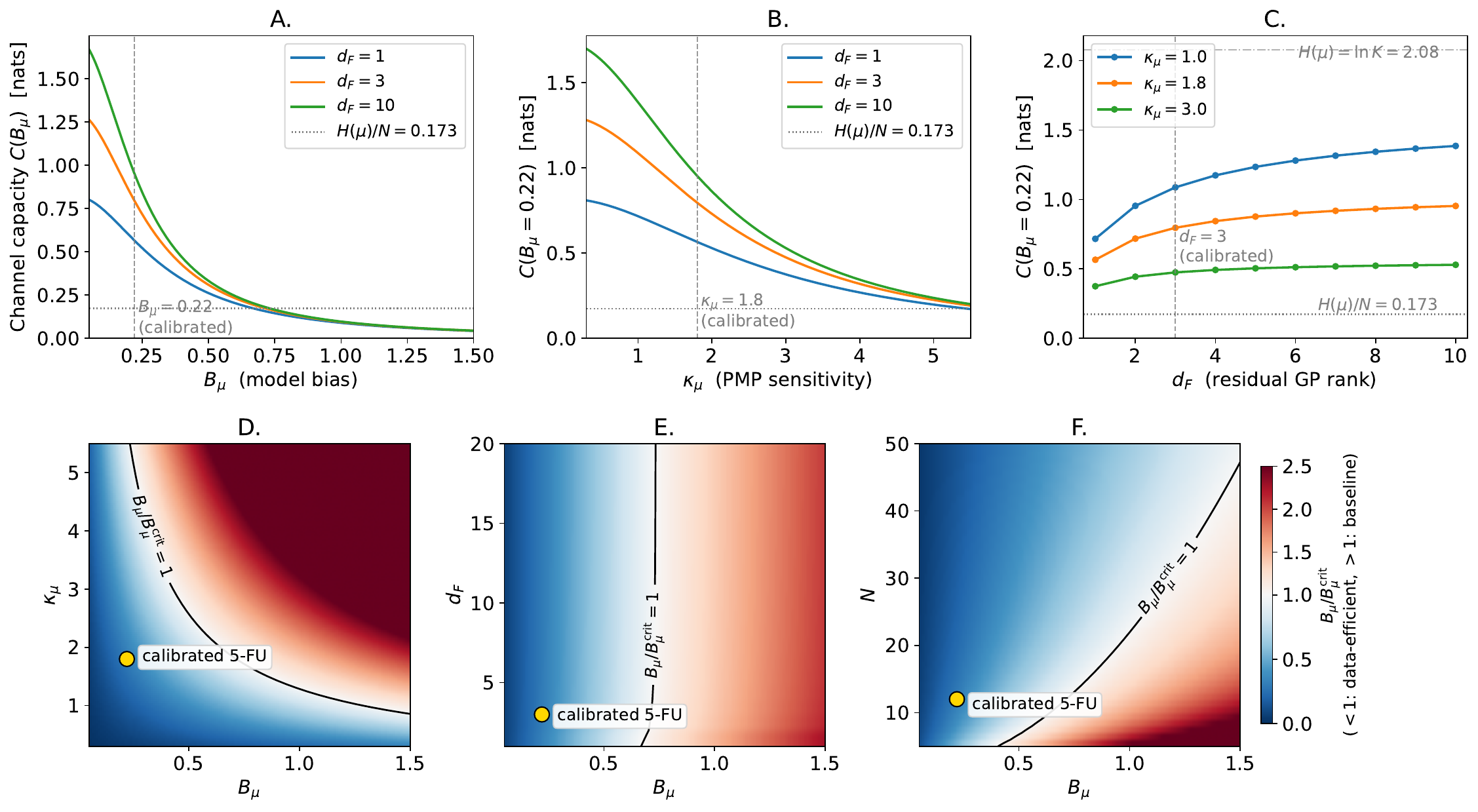}
  \caption{Sensitivity of the model-quality certificate to parameters $\kmu$ (PMP sensitivity), $\Bmu$ (model bias), and $d_\F$ (residual GP rank). Top row: Channel capacity $C(\Bmu)$ as a function of each parameter, with the calibrated 5-FU operating point indicated by vertical dashed lines. Bottom row: Heatmaps of the ratio $\Bmu$ / $\Bcrit$ over parameter pairs, colored by the ratio value (red for >1, blue for <1), with the phase boundary ($\Bmu / \Bcrit = 1$) marked by black contours. The calibrated point is denoted by a gold dot (•). Values below 1 indicate data-efficient regimes.}
  \label{fig:appCalibration_sensitivity}
\end{figure}

Figure~\ref{fig:appCalibration_sensitivity} sweeps \(\kappa_\mu\),
\(B_\mu\), and \(d_F\) at fixed \(K = 8\). Figure~\ref{fig:appCalibration_K}
extends the sensitivity treatment to the discretization choice
\(K\), the only working parameter not varied in
Figure~\ref{fig:appCalibration_sensitivity} plots the phase-transition threshold \(B^{\text{crit}}_\mu(N=12)\) against \(K\); the threshold is essentially flat across \(K \in [2,\,20]\) (it varies only through \(H(\mu) = \ln K\) inside~(8), a logarithmic dependence), and the calibrated \(B_\mu = 0.22\) sits well within the shaded data-efficient region across the entire range.

As illustrated in Figures \ref{fig:appCalibration_sensitivity} and \ref{fig:appCalibration_K}, the values of the calibrated variables introduce only marginal changes to $\Bmu$. This insensitivity further validates the primary conclusions of the analysis presented in Section \ref{sec:application}.

\begin{figure}[!htbp]
  \centering
  \includegraphics[width=0.7\linewidth]{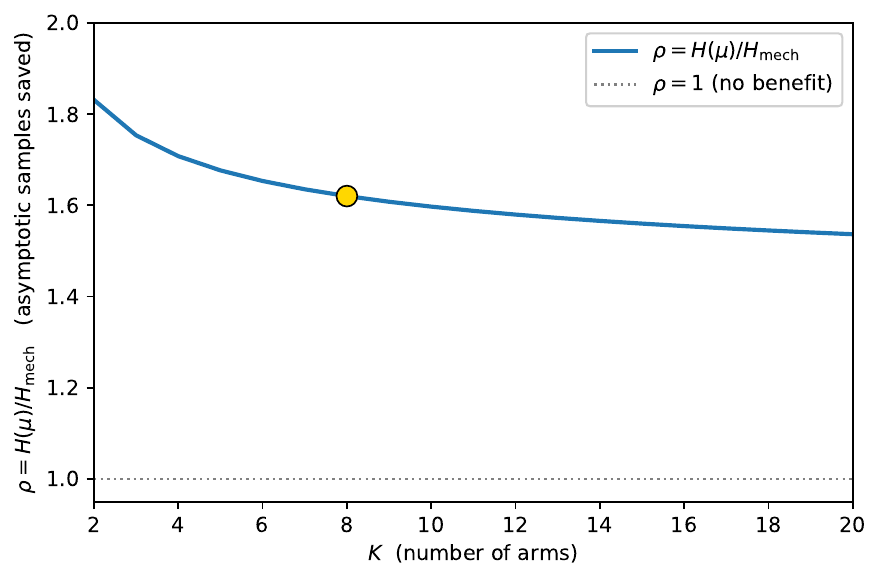}
  \caption{Critical bias as a function of $K$. The blue curve corresponds to the theoretical sample complexity ratio. The dotted line corresponds with the baseline ratio ($\rho = 1$). All other parameters held at their
  calibrated values (\(B_\mu = 0.22\), \(\sigma = 0.40\),
  \(\kappa_\mu = 1.8\), \(d_F = 3\), \(N = 12\)).}
  \label{fig:appCalibration_K}
\end{figure}
\subsection{Summary of working values and provenance}
\label{sec:appCalibration_summary}
Table~\ref{tab:appCalibration_summary} consolidates the working
values and their provenance.
\begin{table}[h]
  \centering
  \begin{tabular}{llll}
    \toprule
    Parameter & Working value & Provenance & Sweep range \\
    \midrule
    \(T\) & 46~h & FOLFOX6 standard \cite{fleming2015,kaldate2012} & --- \\
    Target window & \([20,\,30]~\text{mg}\cdot\text{h/L}\)
      & Standard \cite{capitain2012,kaldate2012,saif2009} & --- \\
    \(S\) & 0.02063 & Kaldate~\cite{kaldate2012}, \(n=307\), \(R^2=0.51\) & --- \\
    \(p_{\text{BSA}}\) & 0.203 & Li~\cite{li2023}, \(n=434\) & --- \\
    \(\sigma\) & 0.40
      & \(\sqrt{p_{\text{BSA}}(1 - p_{\text{BSA}})}\)
      & \([0.357,\, 0.50]\) \\
    \(H(\mu)\) & \(\ln 8\) & Definitional, \(K\) uniform & --- \\
    \(\kappa_{\text{ss}}\) & 0.606
      & Equation~(2); normalisation \(U,\,Y\) & --- \\
    \(p_{\text{opt}}\) & 0.85
      & Mid-range across \cite{capitain2012, gamelin2008multivariable,wilhelm2016prospective}
      & \([0.50,\, 0.95]\) \\
    \(K\) & 8
      & Discretisation; gap \(250~\text{mg/m}^2\)
      & \(\{4, \dots, 16\}\) \\
    \(\kappa_\mu\) & 1.8
      & \(\approx 3 \kappa_{\text{ss}}\); not numerically derived
      & \([0.6,\, 3.0]\) \\
    \(d_F\) & 3
      & Three covariate clusters; eq.~(3)
      & \(\{2, 3, 4, 5\}\) \\
    \(B_\mu\) & 0.22
      & Working value; illustrated using public data
      & \([0.10,\, 0.40]\) \\
    \bottomrule
  \end{tabular}
  \caption{Provenance and sweep range for each parameter. The
  data-efficient classification \(B_\mu < B^{\text{crit}}_\mu\)
  holds across all sweeps in
  Table~\ref{tab:appCalibration_sensitivity}.}
  \label{tab:appCalibration_summary}
\end{table}
\subsection{Scope and outlook}
\label{sec:appCalibration_outlook}
The contribution of this work is computational and theoretical: the
bounds, the certificate, and the calibration tiers establish the
conditions under which a hybrid mechanistic-plus-stochastic prior
provides quantifiable benefit over the uninformed baseline, and
they identify the parameters whose values control that benefit. The
5-FU instantiation demonstrates that these conditions can be
partially anchored in published clinical literature --- the noise
scale \(\sigma\), the dose--AUC slope \(S\), and the BSA-baseline
target attainment \(p_{\text{BSA}}\) are derived from primary
sources --- and identifies, through the tier classification, which
parameters require additional measurement.

Translating the framework into actionable claims about the value of
a specific mechanistic model for a specific clinical task requires
three steps that are outside the scope of a methodological paper. First, a
fully specified ODE (here the 5-compartment Fleming~[12] model)
must be calibrated to a population mean and integrated to produce
per-arm predictions \(p^{\text{model}}_{\pi_k}\). Second, a
held-out validation cohort with per-patient AUC measurements
distributed across the policy class is needed to anchor \(B_\mu\)
via~(4), the Riccati integration of Lemma~1 along the calibrated
trajectory is needed to anchor \(\kappa_\mu\), and per-patient
covariates spanning the three clusters of
Section~\ref{sec:appCalibration_rationalized} are needed to anchor
\(d_F\) via~(3). Third, with these calibrations in place, the
certificate \(C(B_\mu)\) becomes a quantitative statement about a
particular mechanistic model on a particular drug --- and the gap
between the certificate and the empirical regret on the validation
cohort becomes a measurement of the residual mechanistic
information not captured by the bounds. Pursuing this program for
5-FU would convert the framework from a theoretical construction
into a tool for evaluating which components of the Fleming model (for example) contribute most to the prior's value and which biological
covariates merit measurement at the bedside; carrying out this program for 5-FU, and developing analogous programs for other drugs, are natural directions for follow-on clinical work.

% \FloatBarrier 
% \newpage
% \input{checklist.tex}

\end{document}